\documentclass{article}

\usepackage{algorithm}
\usepackage[noend]{algpseudocode}
\usepackage[pdftex]{graphicx}
\usepackage[final]{neurips_wrl2023}

\usepackage{amsmath,amsfonts,bm}

\newcommand{\newterm}[1]{{\bf #1}}

\newcommand\twonorm[1]{\lVert#1\rVert_2}
\newcommand\bigtwonorm[1]{\Bigg\lVert#1\Bigg\rVert_2}
\newcommand\frobnorm[1]{\lVert#1\rVert_F}
\newcommand\maxnorm[1]{\lVert#1\rVert_\infty}
\newcommand\numeq[1]%
  {\stackrel{\scriptscriptstyle(\mkern-1.5mu#1\mkern-1.5mu)}{=}}
\newcommand\numleq[1]%
  {\stackrel{\scriptscriptstyle(\mkern-1.5mu#1\mkern-1.5mu)}{\leq}}
\newcommand\numsim[1]%
  {\stackrel{\scriptscriptstyle#1}{\sim}}

\def\fR{\mathfrak{R}}

\newenvironment{proof}{\paragraph{Proof:}}{\hfill$\square$}
\newtheorem{theorem}{Theorem}
\newtheorem{lemma}{Lemma}
\newtheorem{prop}{Proposition}
\newtheorem{assumption}{Assumption}
\newtheorem{definition}{Definition}
\newtheorem{corollary}{Corollary}

\def\eqref#1{equation~\ref{#1}}

\def\1{\bm{1}}

\def\rmX{{\mathbf{X}}}

\DeclareMathAlphabet{\mathsfit}{\encodingdefault}{\sfdefault}{m}{sl}
\SetMathAlphabet{\mathsfit}{bold}{\encodingdefault}{\sfdefault}{bx}{n}

\def\gA{{\mathcal{A}}}

\def\gD{{\mathcal{D}}}

\def\gF{{\mathcal{F}}}

\def\gL{{\mathcal{L}}}
\def\gM{{\mathcal{M}}}

\def\gO{{\mathcal{O}}}

\def\gS{{\mathcal{S}}}

\def\gX{{\mathcal{X}}}

\def\sP{{\mathbb{P}}}

\def\sR{{\mathbb{R}}}

\newcommand{\E}{\mathbb{E}}

\newcommand{\softmax}{\mathrm{softmax}}

\DeclareMathOperator*{\argmax}{arg\,max}
\DeclareMathOperator*{\argmin}{arg\,min}

\usepackage[utf8]{inputenc} %
\usepackage[T1]{fontenc}    %
\usepackage{hyperref}       %
\usepackage{url}            %
\usepackage{booktabs}       %
\usepackage{amsfonts}       %
\usepackage{nicefrac}       %
\usepackage{microtype}      %

\usepackage{xcolor}

\usepackage[inline]{enumitem}
\setlist*[enumerate,1]{label=(\roman*),font=\itshape}

\title{A Statistical Guarantee for Representation Transfer in Multitask Imitation Learning}

\author{Bryan Chan\thanks{ Work done during internship at Ocado Technology.} \\
Department of Computing Science\\
University of Alberta\\
Edmonton, Canada \\
\texttt{bryan.chan@ualberta.ca}
\And
Karime Pereida\thanks{ Equal advising.} \& James Bergstra\footnotemark[2]\\
AI Platform\\
Ocado Technology\\
Toronto, Canada\\
\texttt{\{k.pereidaperez,james.bergstra\}@ocado.com}
}

\begin{document}

\maketitle

\begin{abstract}
  Transferring representation for multitask imitation learning has the potential to provide improved sample efficiency on learning new tasks, when compared to learning from scratch.
  In this work, we provide a statistical guarantee indicating that we can indeed achieve improved sample efficiency on the target task when a representation is trained using sufficiently diverse source tasks.
  Our theoretical results can be readily extended to account for commonly used neural network architectures with realistic assumptions.
  We conduct empirical analyses that align with our theoretical findings on four simulated environments---in particular leveraging more data from source tasks can improve sample efficiency on learning in the new task.
\end{abstract}

\section{Introduction}

Imitation learning (IL) is a common approach to learn sequential decision making agents---it involves imitating the expert through matching distributions induced by the expert demonstrations \citep{osa2018algorithmic}.
However, current methods require thousands of demonstrations even in simple tasks \citep{mandlekar2022matters,DBLP:conf/corl/JangIKKELLF21,DBLP:journals/ral/AblettCK23}.
Acquiring large amounts of data can be expensive and even infeasible in domains including robotic and healthcare applications.
To address this challenge, empirical research has proposed transferring part of the agent trained from one or more tasks to a target task with the goal of improving sample efficiency on the target task \citep{DBLP:conf/rss/BrohanBCCDFGHHH23,hansen2022pre,DBLP:conf/corl/JangIKKELLF21,NEURIPS2022_ca3b1f24}.
In this paper we show how much sample efficiency on the target task improves, compared to training an agent from scratch, when we transfer a pretrained representation to the target task via multitask imitation learning (MTIL).

We have three main contributions: (1) Although \citet{DBLP:conf/icml/AroraDKLS20} has investigated the benefits of representation transfer, their result does not relate the target task and the source tasks.
We cannot guarantee that transferring the representation to a particular target task can yield any benefit.
Inspired by \citet{NEURIPS2020_59587bff}, we provide an analysis that relates the source and target tasks via the notion of task diversity.
(2) This line of work \citep{DBLP:conf/icml/AroraDKLS20,NEURIPS2020_59587bff,maurer2016benefit} relies on Gaussian complexity,
in this paper we instead use Rademacher complexity and provide a tighter bound by a log factor than using Gaussian complexity.
Our result is due to the objective of behavioural cloning, where the method aims to minimize the Kullback–Leibler (KL) divergence between the expert and the learner \citep{NEURIPS2020_b5c01503}.
The consequence is that we can connect our result with deep-learning theory, where commonly used neural networks are directly quantified with Rademacher complexity \citep{DBLP:journals/actanum/BartlettMR21}.
(3) Based on our theory, we further conduct experiments to demonstrate that transferring representations from source tasks to target tasks is a valid approach---the agent performs better as we increase the number of tasks and data.

\section{Preliminaries}
Sequential decision making problems can be formulated as Markov Decision Processes (MDPs).
An infinite-horizon MDP is a tuple $\gM = \langle \gS, \gA, r, P, \rho, \gamma \rangle$, where $\gS$ is a finite state space, $\gA$ is a finite action space, $r: \gS \times \gA \to [0, 1]$ is the bounded reward function, $P \in \Delta_{\gS \times \gA}^{\gS}$ is the transition distribution over the states for each state-action pair, $\rho \in \Delta^{\gS}$ is the initial state distribution over the states, and $\gamma \in [0, 1)$ is the discount factor.
The agent interacts with the environment through a policy $\pi \in \Pi$, which we assume to be stationary and Markovian (i.e. $\Pi = \Delta^{\gA}_\gS$.)

The interconnection between the policy $\pi$ and the environment $\gM$ induces a random infinite-length trajectory $S_0, A_0, S_1, A_1, \dots$, where $S_0 \sim \rho$, $A_h \sim \pi(\cdot \vert S_h)$, and $S_{h + 1} \sim P(\cdot \vert S_h, A_h)$.
The corresponding discounted stationary state(-action) distribution for policy $\pi$, which describes the ``frequency'' of visiting state $s$ (and action $a$) under $\pi$, can be written as $\nu_\pi(s) = (1 - \gamma) \sum_{h=0}^\infty \gamma^h \sP(S_h = s; \pi)$ (and $\mu_\pi(s, a) = (1 - \gamma) \sum_{h = 0}^\infty \gamma^h \sP (S_h = s, A_h = a ; \pi)$.)

For any policy $\pi \in \Pi$, the corresponding value function $v^\pi : \gS \to \sR$ is defined as $v^\pi (s) = \E_{\pi, \rho, P} \left[ \sum_{h = 0}^{\infty} \gamma^h r(S_h, A_h) \vert S_0 = s \right]$.
The optimal policy $\pi^*$ is such that $\pi^*(s) = \argmax_{\pi \in \Pi} v^\pi(s)$, for any $s \in \gS$ (randomly breaking ties in the case of two or more maxima.)
In general, the goal is to obtain an $\varepsilon$-optimal policy (i.e. $v^\pi(s) \geq v^{\pi^*}(s) - \varepsilon$, for any $s \in \gS$.)

In imitation learning, the learner has no access to the reward function $r$ and is given demonstrations from the optimal (expert) policy $\pi^*$ instead.
The demonstrations form a set of $N$ state-action pairs $\{(s_n, a_n)\}_{n=1}^N$, where $(s_n, a_n) \numsim{\text{i.i.d.}} \mu_{\pi^*}$.
The goal is to obtain an $\varepsilon$-optimal policy using the demonstrations.
Behavioural cloning treats this problem as a supervised learning problem and aims to minimize the risk \citep{NIPS1988_812b4ba2}.
The risk is defined as $\ell(\pi) = \E_{(s, a) \sim \mu_{\pi^*}} \left[ \ell(\pi(s), a) \right]$, where $\pi \in \Pi$ and $\ell (\cdot, \cdot): \Delta^\gA \times \gA \to \sR^+$ is a loss function.
This paper considers the log loss $\ell(\pi(s), a) = -\log \pi(a \vert s)$, which is a surrogate of the 0-1 loss.
The log loss is also equivalent to the KL-divergence between the expert and the learner when the expert is deterministic.

\section{Multitask Imitation Learning (MTIL) with Representation Transfer}
We consider the transfer-learning setting where we are given demonstrations of $T$ source tasks.
We define each task $t$ as an MDP with different transition distributions and reward functions, but with the same state and action spaces.
When the context is unclear, we use subscript, e.g. $\pi_t$, to denote the task-specific objects.
Our goal is to leverage the demonstrations of $T$ source tasks to learn an $\varepsilon$-optimal policy on a new target task $\tau$ with better sample efficiency, when compared to learning from scratch.
To achieve this, we first learn a shared representation from the source tasks and transfer it to the target task.
During the transfer, we fix the representation and only learn the task-specific mapping, similar to a finetuning procedure \citep{DBLP:conf/acl/RuderH18,NEURIPS2022_0cde695b}.

Formally, we consider softmax parameterized policies of the form $\pi^{f, \phi}(s) = \softmax((f \circ \phi) (s))$, where $f \in \gF$ is the task-specific mapping and $\phi \in \Phi$ is the representation.
We perform a two-phase procedure for transfer learning:
(1) learn a representation $\hat{\phi}$ from the source tasks, and
(2) learn a task-specific mapping $\hat{f}$ such that $\pi^{\hat{f}, \hat{\phi}}$ performs well on the target task.
We call the phases respectively the training phase and the transfer phase.

In the training phase, for each task $t$ of the $T$ source tasks, we are given $N$ state-action pairs $\{(s_{t, n}, a_{t, n})\}_{n=1}^N$.
Let $\boldsymbol{f} = (f_1, \dots, f_T)$, where $f_t \in \gF$ is the task-specific mapping for task $t$, for $t \in [T]$, which we write $\boldsymbol{f} \in \gF^{\otimes T}$ for conciseness.
Then, we define the \newterm{empirical training risk} as
\begin{align}
  \label{eqn:train_erm}
  \hat{R}_{train}(\boldsymbol{f}, \phi) := \frac{1}{NT} \sum_{t=1}^T \sum_{n=1}^N \ell(\pi^{f_t, \phi}(s_{t, n}), a_{t, n}),
\end{align}
and the corresponding minimizer of \eqref{eqn:train_erm} is $\hat{\phi} = \argmin_{\phi \in \Phi} \min_{\boldsymbol{f} \in \gF^{\otimes T}} \hat{R}_{train}(\boldsymbol{f}, \phi)$.

In the transfer phase, we are given $M$ state-action pairs $\{(s_{m}, a_{m})\}_{m=1}^M$ for a target task $\tau$.
With the same loss function $\ell$ as \eqref{eqn:train_erm}, we define the \newterm{empirical test risk} as
\begin{align}
  \label{eqn:test_erm}
  \hat{R}_{test}(f_\tau, \phi) := \frac{1}{M} \sum_{m=1}^M \ell(\pi^{f_\tau, \phi}(s_{m}), a_{m}),
\end{align}
where $f_\tau \in \gF$ is the task-specific mapping for task $\tau$.
We obtain a task-specific mapping that minimizes \eqref{eqn:test_erm} based on the representation $\hat{\phi}$ obtained from the training phase.
That is, $\hat{f}_\tau = \argmin_{f \in \gF} \hat{R}_{test}(f, \hat{\phi})$.

We perform empirical risk minimization (ERM) on both stages---the empirical risks allow us to quantify the generalization error of the learner.
Observing that the log loss corresponds to the KL-divergence, we convert the generalization error to policy error.
Finally, this allows us to establish the sample complexity bound of achieving $\varepsilon$-optimal policy based on the diversity of the source tasks.
The diversity is measured with a positive constant $\sigma$, where larger $\sigma$ corresponds to higher diversity.
We now state our main theorem and defer the analysis to appendix \ref{appendix:theoretical_analysis}:
\begin{theorem}
\label{thm:transfer_il_error_bound}
(Transfer Imitation Learning Policy Error Bound.)
Let $\pi^*_\tau$ be the optimal policy for the target task $\tau$.
Let $\hat{\phi}$ be the ERM of $\hat{R}_{train}$ defined in \eqref{eqn:train_erm} and let $\hat{f}_\tau$ be the ERM of $\hat{R}_{test}$ defined in \eqref{eqn:test_erm} by fixing $\hat{\phi}$.
Let $\sigma > 0$.
Suppose the source tasks are $\sigma$-diverse.
With a deterministic expert policy $\pi^*_\tau$ and under some assumptions, we have that with probability $1 - 2\delta$,
\begin{align}
    \text{Policy Error} = \maxnorm{v^{\pi^*_\tau} - v^{\softmax (\hat{f}_\tau \circ \hat{\phi})}} \leq \frac{2 \sqrt{2}}{(1 - \gamma)^2} \sqrt{\varepsilon_{gen} + 2\zeta},
\end{align}
where $\varepsilon_{gen} = \gO\left(1/\sqrt{\sigma^2 NT}, 1/\sqrt{M}, \fR_{NT}(\Phi) / \sigma\right)$ is the generalization error, $\fR_{NT}(\Phi)$ is the Rademacher complexity, and $\zeta \in (0, 1)$ is a constant related to the policy realizability.
\end{theorem}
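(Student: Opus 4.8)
The plan is to work backwards from the policy error to a statement about the population log-loss risk on the target task, and then to control that risk by combining a transfer argument with uniform convergence. First I would invoke the performance difference lemma together with a distribution-shift estimate to show that $\maxnorm{v^{\pi^*_\tau} - v^{\pi^{\hat f_\tau,\hat\phi}}}$ is controlled by $\tfrac{c}{(1-\gamma)^2}\,\E_{s}\!\big[\mathrm{TV}(\pi^*_\tau(\cdot|s),\pi^{\hat f_\tau,\hat\phi}(\cdot|s))\big]$ for an absolute constant $c$, where the expectation is over the expert's state distribution and $\pi^{\hat f_\tau,\hat\phi}=\softmax(\hat f_\tau\circ\hat\phi)$. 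Jensen's and Pinsker's inequalities then upgrade this to $\tfrac{c'}{(1-\gamma)^2}\sqrt{\E_{s}[\KL(\pi^*_\tau(\cdot|s)\,\|\,\pi^{\hat f_\tau,\hat\phi}(\cdot|s))]}$, which tracks the constant $\tfrac{2\sqrt2}{(1-\gamma)^2}$ in the statement. Because the expert $\pi^*_\tau$ is deterministic, $R_{test}(\hat f_\tau,\hat\phi)=\E_{(s,a)\sim\mu_{\pi^*_\tau}}[-\log\pi^{\hat f_\tau,\hat\phi}(a|s)]$ equals exactly that expected KL, so it suffices to prove $R_{test}(\hat f_\tau,\hat\phi)\le\varepsilon_{gen}+2\zeta$.

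\textbf{From the transfer ERM back to the source tasks.} Next I would decompose $R_{test}(\hat f_\tau,\hat\phi)=\big(R_{test}(\hat f_\tau,\hat\phi)-\hat R_{test}(\hat f_\tau,\hat\phi)\big)+\hat R_{test}(\hat f_\tau,\hat\phi)$, bound the first (generalization) gap by a one-sided uniform deviation over $f\in\gF$ with $\hat\phi$ held fixed — this gives the $\gO(1/\sqrt M)$ term, via the Rademacher complexity of $\ell\circ(\gF\circ\hat\phi)$ on the $M$ target pairs plus a bounded-loss concentration inequality — and then use ERM optimality $\hat R_{test}(\hat f_\tau,\hat\phi)\le\hat R_{test}(f,\hat\phi)$ followed by a second uniform-deviation step to pass to $\inf_{f\in\gF}R_{test}(f,\hat\phi)$. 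Here the $\sigma$-diversity assumption enters: following the notion used by \citet{NEURIPS2020_59587bff}, it says that for every $\phi\in\Phi$ the target excess risk $\inf_f R_{test}(f,\phi)-\inf_f R_{test}(f,\phi^\star)$ is at most $\tfrac1\sigma$ times the average source excess risk $\tfrac1T\sum_{t=1}^T\big(\inf_f R_t(f,\phi)-\inf_f R_t(f,\phi^\star)\big)$, where $\phi^\star$ is an optimal shared representation. Applying this with $\phi=\hat\phi$ reduces everything to bounding the average source excess risk of $\hat\phi$ divided by $\sigma$, which is the source of the $1/\sigma$ factors in both the $1/\sqrt{\sigma^2 NT}$ and $\fR_{NT}(\Phi)/\sigma$ terms.

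\textbf{Controlling the source excess risk.} To bound $\tfrac1T\sum_t\big(\inf_f R_t(f,\hat\phi)-\inf_f R_t(f,\phi^\star)\big)$, I would use that $\hat\phi$ and its heads $\hat{\boldsymbol f}$ minimize $\hat R_{train}$, so $\tfrac1T\sum_t\hat R_t(\hat f_t,\hat\phi)\le\tfrac1T\sum_t\hat R_t(f^\star_t,\phi^\star)$, and then apply uniform convergence over the joint class $\gF^{\otimes T}\times\Phi$ on the $NT$ training pairs. The crucial step is a chain rule for Rademacher complexity that splits the complexity of $\{(s,a)\mapsto \ell(\softmax((f_t\circ\phi)(s)),a)\}$ into a per-task head contribution and a shared-representation contribution: using that the log loss composed with softmax is Lipschitz in the logits on the (assumed bounded) range of $f\circ\phi$, a vector-contraction inequality yields a head term that, averaged over the $T$ tasks, scales like $1/\sqrt{\sigma^2 NT}$ and a shared term equal to $\fR_{NT}(\Phi)$ up to constants. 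This is where using Rademacher rather than Gaussian complexity pays off: the KL/log-loss objective is self-bounding — its second moment is controlled by its mean in the near-realizable regime quantified by $\zeta$ — so a Bernstein-type localization shaves the $\sqrt{\log(\cdot)}$ factor that a Gaussian-complexity chaining step would incur. Finally the approximation terms $\inf_f R_t(f,\phi^\star)$ on the source side and $\inf_f R_{test}(f,\phi^\star)$ on the target side are each bounded by the realizability constant, producing the additive $2\zeta$; a union bound over the two ERM phases accounts for the failure probability $2\delta$, and collecting the pieces gives $R_{test}(\hat f_\tau,\hat\phi)\le\varepsilon_{gen}+2\zeta$, which with the reduction of the first paragraph finishes the proof.

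\textbf{Main obstacle.} I expect the hard part to be the Rademacher chain rule combined with the localization argument: one must (i) peel the log loss and softmax off the composite class without losing more than a Lipschitz constant, which forces the boundedness assumption on the logits; (ii) decouple the $T$ jointly trained heads from the single shared $\phi$ while correctly tracking how the $NT$ pairs are shared; and (iii) exploit the self-bounding property of the log loss to obtain the fast $1/\sqrt{\sigma^2 NT}$ rate rather than a slower $1/\sqrt{NT}$-with-log rate, all while keeping the $\sigma$-dependence clean enough that the subsequent division by $\sigma$ from $\sigma$-diversity does not blow up the bound. A secondary delicate point is phrasing the $\sigma$-diversity condition so that it is weak enough to be plausible yet strong enough to make the transfer step go through for the specific excess-risk functional induced by the log loss.
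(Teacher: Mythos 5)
Your proposal follows essentially the same route as the paper: reduce the policy error to the expected KL (the paper simply cites this as a known policy-error bound rather than re-deriving it via TV and Pinsker), identify that KL with the population test risk for a deterministic expert, split the test risk into an $M$-sample estimation term plus a representation-difference term, invoke $\sigma$-diversity to pass to the task-averaged source excess risk, and control that by ERM plus uniform convergence with Maurer's vector-contraction inequality, with the $2\zeta$ term coming from realizability via $-\log(1-\zeta)\leq 2\zeta$. One correction: the "main obstacle" you anticipate does not exist --- the $1/\sqrt{\sigma^2 NT}$ term is not a fast rate requiring Bernstein-type localization or a self-bounding property of the log loss; it is just the ordinary McDiarmid concentration term $B\sqrt{\log(2/\delta)/(2NT)}$ multiplied by the $1/\sigma$ prefactor that the diversity definition already supplies, and the claimed log-factor improvement over Gaussian complexity comes from a generic comparison of the two complexities (lemma 4 of Bartlett--Mendelson), not from localization. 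Dropping that machinery, your argument matches the paper's.
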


Theorem \ref{thm:transfer_il_error_bound} indicates that we can obtain $\varepsilon$-optimal policy through the transfer-learning procedure.
Notably, we can trade-off the number of target data $M$ at the cost of the number of source tasks $T$ and number of training data per task $N$.
In contrast, behavioural cloning can only leverage the target data to train both the representation and the task-specific mapping.
Consequently, if the representation class $\Phi$ is expressive, it is beneficial to pretrain the representation using the existing source data.

\paragraph{Remark 1}
In practice, our policies are neural networks \citep{DBLP:conf/icml/FujimotoHM18,DBLP:conf/icml/HaarnojaZAL18,yarats2020image}---we can consider the last layer as the task-specific mapping $f$ and the remaining layers as the representation $\phi$.
These neural networks include multilayer perceptrons and convolutional neural networks with Lipschitz activation functions (e.g. ReLU, tanh, sigmoid, max-pooling, etc.).
Their Rademacher complexities can be upper bounded based on the amount of training data, number of layers, number of hidden units, and their parameter norms \citep{DBLP:conf/colt/NeyshaburTS15,sokolic2016lessons,DBLP:conf/colt/GolowichRS18,truong2022rademacher}.
Due to lemma 4 of \citet{DBLP:journals/jmlr/BartlettM02}, our result provides a tighter bound than the Gaussian complexity used in existing works by $\gO(\ln NT)$ \citep{DBLP:conf/icml/AroraDKLS20,NEURIPS2020_59587bff,maurer2016benefit}.

\section{Experiments}
\label{sec:empirical_analysis}
We aim to investigate the following questions regarding MTIL:
\begin{enumerate*}
 \item Can we achieve better imitation performance than behavioural cloning (BC) with less target data?,
 \item How does the number of source tasks or number of source data impact imitation performance?,
 \item How does the number of target data influence on imitation performance compared to the number of source tasks and source data?
\end{enumerate*}

We implement our MTIL procedure with multitask behavioural cloning (MTBC).
MTBC is first pretrained with $N$ source data from $T$ source tasks, then finetuned with $M$ target data (see appendix~\ref{appendix:algorithm_details} for details), whereas BC is trained with only $M$ target data.
The analyses are done on four simulated environments: frozen lake, pendulum, cheetah, and walker.
We convert the continuous action spaces into discrete action spaces described in appendix \ref{appendix:envs}.
The detailed descriptions of the implementation and the varying environmental parameters are found in appendix \ref{sec:implementation_details}.
For the analysis below, we determined that each environment requires $\lvert \gD \rvert$ demonstrations to achieve near-expert performance when training BC from scratch.
Table \ref{tab:data_per_task} specifies $\lvert \gD \rvert$ for each environment.

\begin{table}[t]
	\caption{The number of demonstrations $\lvert \gD \rvert$ for each environment.
  }
   \label{tab:data_per_task}
   \begin{center}
   \begin{tabular}{cccc}
    \multicolumn{1}{c}{Frozen Lake}&\multicolumn{1}{c}{Pendulum}&\multicolumn{1}{c}{Cheetah}  &\multicolumn{1}{c}{Walker}
   \\ \hline
    500 & 1000 & 1000 & 500
   \end{tabular}
   \end{center}
   \vspace{-5mm}
\end{table}

We first investigate questions \emph{(1)} and \emph{(2)}.
We fix the number of target data $M = \lvert \gD \rvert$ while varying the number of source data $N$ and the number of source tasks $T$.
Figure~\ref{fig:main_disc} indicates that as we increase $N$ and $T$, MTBC generally improves and can outperform BC with the same number of target data.
For cheetah and walker, which are more complex environments, MTBC outperforms BC while MTBC and BC have comparable performance in simpler environments.

\begin{figure}[t]
  \centering
  \includegraphics[width=0.95\linewidth]{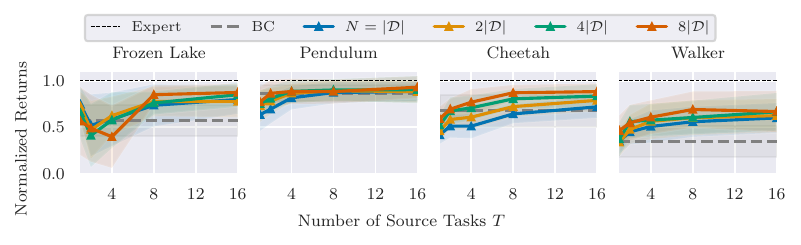}
  \caption{
      \small The performance of MTBC as we vary $N$ and $T$.
      Each solid line colour corresponds to a particular $N$.
      The solid line corresponds to the mean and the shaded region is 1 standard error from the mean.
  }
  \label{fig:main_disc}
\end{figure}

\begin{figure}[t!] %
  \centering
  \includegraphics[width=0.95\linewidth]{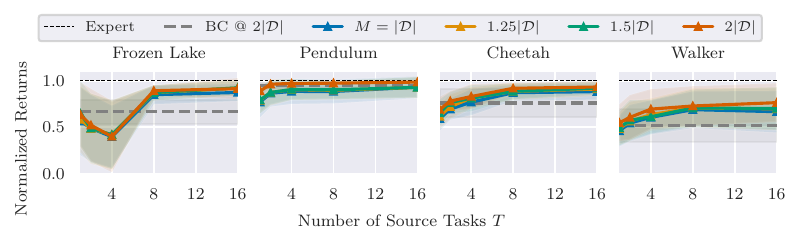}
  \caption{
      \small The performance of MTBC as we vary $M$ and $T$.
      Each solid line colour corresponds to a particular $M$.
      The solid line corresponds to the mean and the shaded region is 1 standard error from the mean.
  }
  \label{fig:vary_target_data_disc}
\end{figure}

To investigate question (3), we fix the amount of source data $N = 8 \lvert \gD \rvert$ while varying the amount of target data $M$ by multiples of $\lvert \gD \rvert$.
For comparison we also include BC trained with $2 \lvert \gD \rvert$ target data.
Figure~\ref{fig:vary_target_data_disc} shows that increasing the number of target data only marginally improves imitation performance.
This result indicates that higher returns are mainly achieved by increasing the amount of source data $N$ and number of source tasks $T$. The latter is due to the learned representation being more expressive than the task-specific mapping.

\section{Conclusion and Future Directions}
We have theoretically shown a statistical bound to recover the expert policy when transferring representation from a set of source tasks to a target task.
This bound can be directly extended to commonly used neural networks.
We have also conducted experiments to further support our theoretical analysis.
In particular our experiments show that, when we pretrain a representation increasing amount source tasks and source data, one can reduce the amount of target data to imitate the expert policy, compared to training from scratch.
However, we still require a practical method to quantify task diversity prior to training.
Another limitation in our work is assuming that there exists one shared representation.
This assumption can be relaxed by considering the meta-learning formulation where we allow the representation per task to be in a neighbourhood of some reference representation \citep{DBLP:conf/icml/CollinsMOS22}.

\bibliographystyle{plainnat}
\bibliography{reference}

\begin{thebibliography}{41}
\providecommand{\natexlab}[1]{#1}
\providecommand{\url}[1]{\texttt{#1}}
\expandafter\ifx\csname urlstyle\endcsname\relax
  \providecommand{\doi}[1]{doi: #1}\else
  \providecommand{\doi}{doi: \begingroup \urlstyle{rm}\Url}\fi

\bibitem[Ablett et~al.(2023)Ablett, Chan, and
  Kelly]{DBLP:journals/ral/AblettCK23}
Trevor Ablett, Bryan Chan, and Jonathan Kelly.
\newblock Learning from guided play: Improving exploration for adversarial
  imitation learning with simple auxiliary tasks.
\newblock \emph{{IEEE} Robotics and Automation Letters}, 8\penalty0
  (3):\penalty0 1263--1270, 2023.

\bibitem[Arora et~al.(2020)Arora, Du, Kakade, Luo, and
  Saunshi]{DBLP:conf/icml/AroraDKLS20}
Sanjeev Arora, Simon~S. Du, Sham~M. Kakade, Yuping Luo, and Nikunj Saunshi.
\newblock Provable representation learning for imitation learning via bi-level
  optimization.
\newblock In \emph{{International Conference on Machine Learning}}, volume 119
  of \emph{Proceedings of Machine Learning Research}, pages 367--376. {PMLR},
  2020.

\bibitem[Bartlett and Mendelson(2002)]{DBLP:journals/jmlr/BartlettM02}
Peter~L. Bartlett and Shahar Mendelson.
\newblock {Rademacher} and {Gaussian} complexities: Risk bounds and structural
  results.
\newblock \emph{Journal of Machine Learning Research}, 3:\penalty0 463--482,
  2002.

\bibitem[Bartlett et~al.(2021)Bartlett, Montanari, and
  Rakhlin]{DBLP:journals/actanum/BartlettMR21}
Peter~L. Bartlett, Andrea Montanari, and Alexander Rakhlin.
\newblock Deep learning: a statistical viewpoint.
\newblock \emph{Acta Numer.}, 30:\penalty0 87--201, 2021.

\bibitem[Bommasani et~al.(2021)Bommasani, Hudson, Adeli, Altman, Arora, von
  Arx, Bernstein, Bohg, Bosselut, Brunskill,
  et~al.]{bommasani2021opportunities}
Rishi Bommasani, Drew~A Hudson, Ehsan Adeli, Russ Altman, Simran Arora, Sydney
  von Arx, Michael~S Bernstein, Jeannette Bohg, Antoine Bosselut, Emma
  Brunskill, et~al.
\newblock On the opportunities and risks of foundation models.
\newblock \emph{arXiv preprint arXiv:2108.07258}, 2021.

\bibitem[Brohan et~al.(2023)Brohan, Brown, Carbajal, Chebotar, Dabis, Finn,
  Gopalakrishnan, Hausman, Herzog, Hsu,
  et~al.]{DBLP:conf/rss/BrohanBCCDFGHHH23}
Anthony Brohan, Noah Brown, Justice Carbajal, Yevgen Chebotar, Joseph Dabis,
  Chelsea Finn, Keerthana Gopalakrishnan, Karol Hausman, Alexander Herzog,
  Jasmine Hsu, et~al.
\newblock {RT-1:} robotics transformer for real-world control at scale.
\newblock In \emph{Robotics: Science and Systems}, 2023.

\bibitem[Collins et~al.(2022)Collins, Mokhtari, Oh, and
  Shakkottai]{DBLP:conf/icml/CollinsMOS22}
Liam Collins, Aryan Mokhtari, Sewoong Oh, and Sanjay Shakkottai.
\newblock {MAML} and {ANIL} provably learn representations.
\newblock In \emph{{International Conference on Machine Learning}}, volume 162
  of \emph{Proceedings of Machine Learning Research}, pages 4238--4310. {PMLR},
  2022.

\bibitem[Dadashi et~al.(2022)Dadashi, Hussenot, Vincent, Girgin, Raichuk,
  Geist, and Pietquin]{DBLP:conf/icml/DadashiHVGRGP22}
Robert Dadashi, L{\'{e}}onard Hussenot, Damien Vincent, Sertan Girgin, Anton
  Raichuk, Matthieu Geist, and Olivier Pietquin.
\newblock Continuous control with action quantization from demonstrations.
\newblock In \emph{{International Conference on Machine Learning}}, volume 162
  of \emph{Proceedings of Machine Learning Research}, pages 4537--4557. {PMLR},
  2022.

\bibitem[Engstrom et~al.(2020)Engstrom, Ilyas, Santurkar, Tsipras, Janoos,
  Rudolph, and Madry]{engstrom2020implementation}
Logan Engstrom, Andrew Ilyas, Shibani Santurkar, Dimitris Tsipras, Firdaus
  Janoos, Larry Rudolph, and Aleksander Madry.
\newblock Implementation matters in deep policy gradients: {A} case study on
  {PPO} and {TRPO}.
\newblock In \emph{International Conference on Learning Representations}, 2020.

\bibitem[Fujimoto et~al.(2018)Fujimoto, van Hoof, and
  Meger]{DBLP:conf/icml/FujimotoHM18}
Scott Fujimoto, Herke van Hoof, and David Meger.
\newblock Addressing function approximation error in actor-critic methods.
\newblock In \emph{{International Conference on Machine Learning}}, volume~80
  of \emph{Proceedings of Machine Learning Research}, pages 1582--1591. {PMLR},
  2018.

\bibitem[Golowich et~al.(2018)Golowich, Rakhlin, and
  Shamir]{DBLP:conf/colt/GolowichRS18}
Noah Golowich, Alexander Rakhlin, and Ohad Shamir.
\newblock Size-independent sample complexity of neural networks.
\newblock In \emph{{Conference on Learning Theory}}, volume~75 of
  \emph{Proceedings of Machine Learning Research}, pages 297--299. {PMLR},
  2018.

\bibitem[Guo et~al.(2023)Guo, Makdah, Krishnan, and
  Pasqualetti]{DBLP:journals/csysl/GuoMKP23}
Taosha Guo, Abed AlRahman~Al Makdah, Vishaal Krishnan, and Fabio Pasqualetti.
\newblock Imitation and transfer learning for {LQG} control.
\newblock \emph{{IEEE Control Systems Letters}}, 7:\penalty0 2149--2154, 2023.

\bibitem[Haarnoja et~al.(2018)Haarnoja, Zhou, Abbeel, and
  Levine]{DBLP:conf/icml/HaarnojaZAL18}
Tuomas Haarnoja, Aurick Zhou, Pieter Abbeel, and Sergey Levine.
\newblock Soft actor-critic: Off-policy maximum entropy deep reinforcement
  learning with a stochastic actor.
\newblock In \emph{{International Conference on Machine Learning}}, volume~80
  of \emph{Proceedings of Machine Learning Research}, pages 1856--1865. {PMLR},
  2018.

\bibitem[Hansen et~al.(2022)Hansen, Yuan, Ze, Mu, Rajeswaran, Su, Xu, and
  Wang]{hansen2022pre}
Nicklas Hansen, Zhecheng Yuan, Yanjie Ze, Tongzhou Mu, Aravind Rajeswaran, Hao
  Su, Huazhe Xu, and Xiaolong Wang.
\newblock On pre-training for visuo-motor control: Revisiting a
  learning-from-scratch baseline.
\newblock In \emph{Conference on Robot Learning 2022 Workshop on Pre-training
  Robot Learning}, 2022.

\bibitem[Howard and Ruder(2018)]{DBLP:conf/acl/RuderH18}
Jeremy Howard and Sebastian Ruder.
\newblock Universal language model fine-tuning for text classification.
\newblock In \emph{{ACL} {(1)}}, pages 328--339. Association for Computational
  Linguistics, 2018.

\bibitem[Hsu et~al.(2020)Hsu, Mendler-D{\"u}nner, and Hardt]{hsu2020revisiting}
Chloe Ching-Yun Hsu, Celestine Mendler-D{\"u}nner, and Moritz Hardt.
\newblock Revisiting design choices in proximal policy optimization.
\newblock \emph{arXiv preprint arXiv:2009.10897}, 2020.

\bibitem[Jang et~al.(2021)Jang, Irpan, Khansari, Kappler, Ebert, Lynch, Levine,
  and Finn]{DBLP:conf/corl/JangIKKELLF21}
Eric Jang, Alex Irpan, Mohi Khansari, Daniel Kappler, Frederik Ebert, Corey
  Lynch, Sergey Levine, and Chelsea Finn.
\newblock {BC-Z:} zero-shot task generalization with robotic imitation
  learning.
\newblock In \emph{Conference on Robot Learning}, volume 164 of
  \emph{Proceedings of Machine Learning Research}, pages 991--1002. {PMLR},
  2021.

\bibitem[Kingma and Ba(2015)]{DBLP:journals/corr/KingmaB14}
Diederik~P. Kingma and Jimmy Ba.
\newblock Adam: {A} method for stochastic optimization.
\newblock In \emph{International Conference on Learning Representations}, 2015.

\bibitem[Koltchinskii and Panchenko(2002)]{koltchinskii2002empirical}
Vladimir Koltchinskii and Dmitry Panchenko.
\newblock Empirical margin distributions and bounding the generalization error
  of combined classifiers.
\newblock \emph{The Annals of Statistics}, 30\penalty0 (1):\penalty0 1--50,
  2002.

\bibitem[Li et~al.(2022)Li, Puig, Paxton, Du, Wang, Fan, Chen, Huang,
  Aky\"{u}rek, Anandkumar, Andreas, Mordatch, Torralba, and
  Zhu]{NEURIPS2022_ca3b1f24}
Shuang Li, Xavier Puig, Chris Paxton, Yilun Du, Clinton Wang, Linxi Fan, Tao
  Chen, De-An Huang, Ekin Aky\"{u}rek, Anima Anandkumar, Jacob Andreas, Igor
  Mordatch, Antonio Torralba, and Yuke Zhu.
\newblock Pre-trained language models for interactive decision-making.
\newblock In \emph{Advances in Neural Information Processing Systems},
  volume~35, pages 31199--31212, 2022.

\bibitem[Liu et~al.(2022)Liu, Tam, Muqeeth, Mohta, Huang, Bansal, and
  Raffel]{NEURIPS2022_0cde695b}
Haokun Liu, Derek Tam, Mohammed Muqeeth, Jay Mohta, Tenghao Huang, Mohit
  Bansal, and Colin~A Raffel.
\newblock Few-shot parameter-efficient fine-tuning is better and cheaper than
  in-context learning.
\newblock In \emph{Advances in Neural Information Processing Systems},
  volume~35, pages 1950--1965, 2022.

\bibitem[Mandlekar et~al.(2022)Mandlekar, Xu, Wong, Nasiriany, Wang, Kulkarni,
  Fei-Fei, Savarese, Zhu, and Mart{\'\i}n-Mart{\'\i}n]{mandlekar2022matters}
Ajay Mandlekar, Danfei Xu, Josiah Wong, Soroush Nasiriany, Chen Wang, Rohun
  Kulkarni, Li~Fei-Fei, Silvio Savarese, Yuke Zhu, and Roberto
  Mart{\'\i}n-Mart{\'\i}n.
\newblock What matters in learning from offline human demonstrations for robot
  manipulation.
\newblock In \emph{{Conference on Robot Learning}}, pages 1678--1690. PMLR,
  2022.

\bibitem[Maurer(2016)]{DBLP:conf/alt/Maurer16}
Andreas Maurer.
\newblock A vector-contraction inequality for {Rademacher} complexities.
\newblock In \emph{{International Conference on Algorithmic Learning Theory}},
  volume 9925 of \emph{Lecture Notes in Computer Science}, pages 3--17, 2016.

\bibitem[Maurer et~al.(2016)Maurer, Pontil, and
  Romera-Paredes]{maurer2016benefit}
Andreas Maurer, Massimiliano Pontil, and Bernardino Romera-Paredes.
\newblock The benefit of multitask representation learning.
\newblock \emph{Journal of Machine Learning Research}, 17\penalty0
  (81):\penalty0 1--32, 2016.

\bibitem[Neyshabur et~al.(2015)Neyshabur, Tomioka, and
  Srebro]{DBLP:conf/colt/NeyshaburTS15}
Behnam Neyshabur, Ryota Tomioka, and Nathan Srebro.
\newblock Norm-based capacity control in neural networks.
\newblock In \emph{{Conference on Learning Theory}}, volume~40 of \emph{{JMLR}
  Workshop and Conference Proceedings}, pages 1376--1401, 2015.

\bibitem[Osa et~al.(2018)Osa, Pajarinen, Neumann, Bagnell, Abbeel, Peters,
  et~al.]{osa2018algorithmic}
Takayuki Osa, Joni Pajarinen, Gerhard Neumann, J~Andrew Bagnell, Pieter Abbeel,
  Jan Peters, et~al.
\newblock An algorithmic perspective on imitation learning.
\newblock \emph{Foundations and Trends{\textregistered} in Robotics},
  7\penalty0 (1-2):\penalty0 1--179, 2018.

\bibitem[Ouyang et~al.(2022)Ouyang, Wu, Jiang, Almeida, Wainwright, Mishkin,
  Zhang, Agarwal, Slama, Ray, Schulman, Hilton, Kelton, Miller, Simens, Askell,
  Welinder, Christiano, Leike, and Lowe]{NEURIPS2022_b1efde53}
Long Ouyang, Jeffrey Wu, Xu~Jiang, Diogo Almeida, Carroll Wainwright, Pamela
  Mishkin, Chong Zhang, Sandhini Agarwal, Katarina Slama, Alex Ray, John
  Schulman, Jacob Hilton, Fraser Kelton, Luke Miller, Maddie Simens, Amanda
  Askell, Peter Welinder, Paul~F Christiano, Jan Leike, and Ryan Lowe.
\newblock Training language models to follow instructions with human feedback.
\newblock In \emph{Advances in Neural Information Processing Systems},
  volume~35, pages 27730--27744, 2022.

\bibitem[Pomerleau(1988)]{NIPS1988_812b4ba2}
Dean~A. Pomerleau.
\newblock Alvinn: An autonomous land vehicle in a neural network.
\newblock In \emph{Advances in Neural Information Processing Systems},
  volume~1, 1988.

\bibitem[Schulman et~al.(2017)Schulman, Wolski, Dhariwal, Radford, and
  Klimov]{schulman2017proximal}
John Schulman, Filip Wolski, Prafulla Dhariwal, Alec Radford, and Oleg Klimov.
\newblock Proximal policy optimization algorithms.
\newblock \emph{arXiv preprint arXiv:1707.06347}, 2017.

\bibitem[Seyde et~al.(2021)Seyde, Gilitschenski, Schwarting, Stellato,
  Riedmiller, Wulfmeier, and Rus]{NEURIPS2021_e46be61f}
Tim Seyde, Igor Gilitschenski, Wilko Schwarting, Bartolomeo Stellato, Martin
  Riedmiller, Markus Wulfmeier, and Daniela Rus.
\newblock Is {Bang-Bang} control all you need? solving continuous control with
  {Bernoulli} policies.
\newblock In \emph{Advances in Neural Information Processing Systems},
  volume~34, pages 27209--27221, 2021.

\bibitem[Seyde et~al.(2022)Seyde, Werner, Schwarting, Gilitschenski,
  Riedmiller, Rus, and Wulfmeier]{seyde2022solving}
Tim Seyde, Peter Werner, Wilko Schwarting, Igor Gilitschenski, Martin
  Riedmiller, Daniela Rus, and Markus Wulfmeier.
\newblock Solving continuous control via {Q-learning}.
\newblock In \emph{{International Conference on Learning Representations}},
  2022.

\bibitem[Sokolic et~al.(2016)Sokolic, Gyries, Sapiro, and
  Rodrigues]{sokolic2016lessons}
Jure Sokolic, R~Gyries, Guillermo Sapiro, and Miguel~RD Rodrigues.
\newblock Lessons from the {Rademacher} complexity for deep learning.
\newblock International Conference on Learning Representations, 2016.

\bibitem[Towers et~al.(2023)Towers, Terry, Kwiatkowski, Balis, Cola, Deleu,
  Goulão, Kallinteris, KG, Krimmel, Perez-Vicente, Pierré, Schulhoff, Tai,
  Shen, and Younis]{towers_gymnasium_2023}
Mark Towers, Jordan~K. Terry, Ariel Kwiatkowski, John~U. Balis, Gianluca~de
  Cola, Tristan Deleu, Manuel Goulão, Andreas Kallinteris, Arjun KG, Markus
  Krimmel, Rodrigo Perez-Vicente, Andrea Pierré, Sander Schulhoff, Jun~Jet
  Tai, Andrew Tan~Jin Shen, and Omar~G. Younis.
\newblock Gymnasium, March 2023.
\newblock URL \url{https://zenodo.org/record/8127025}.

\bibitem[Tripuraneni et~al.(2020)Tripuraneni, Jordan, and
  Jin]{NEURIPS2020_59587bff}
Nilesh Tripuraneni, Michael Jordan, and Chi Jin.
\newblock On the theory of transfer learning: The importance of task diversity.
\newblock In \emph{Advances in Neural Information Processing Systems},
  volume~33, pages 7852--7862, 2020.

\bibitem[Truong(2022)]{truong2022rademacher}
Lan~V Truong.
\newblock On {Rademacher} complexity-based generalization bounds for deep
  learning.
\newblock \emph{arXiv preprint arXiv:2208.04284}, 2022.

\bibitem[Tunyasuvunakool et~al.(2020)Tunyasuvunakool, Muldal, Doron, Liu,
  Bohez, Merel, Erez, Lillicrap, Heess, and Tassa]{tunyasuvunakool2020}
Saran Tunyasuvunakool, Alistair Muldal, Yotam Doron, Siqi Liu, Steven Bohez,
  Josh Merel, Tom Erez, Timothy Lillicrap, Nicolas Heess, and Yuval Tassa.
\newblock dm\_control: Software and tasks for continuous control.
\newblock \emph{Software Impacts}, 6:\penalty0 100022, 2020.
\newblock ISSN 2665-9638.
\newblock \doi{https://doi.org/10.1016/j.simpa.2020.100022}.
\newblock URL
  \url{https://www.sciencedirect.com/science/article/pii/S2665963820300099}.

\bibitem[Wainwright(2019)]{wainwright2019high}
Martin~J Wainwright.
\newblock \emph{High-dimensional Statistics: A Non-asymptotic Viewpoint},
  volume~48.
\newblock Cambridge University Press, 2019.

\bibitem[Wei et~al.(2021)Wei, Bosma, Zhao, Guu, Yu, Lester, Du, Dai, and
  Le]{wei2021finetuned}
Jason Wei, Maarten Bosma, Vincent Zhao, Kelvin Guu, Adams~Wei Yu, Brian Lester,
  Nan Du, Andrew~M Dai, and Quoc~V Le.
\newblock Finetuned language models are zero-shot learners.
\newblock In \emph{International Conference on Learning Representations}, 2021.

\bibitem[Xu et~al.(2020)Xu, Li, and Yu]{NEURIPS2020_b5c01503}
Tian Xu, Ziniu Li, and Yang Yu.
\newblock Error bounds of imitating policies and environments.
\newblock In H.~Larochelle, M.~Ranzato, R.~Hadsell, M.F. Balcan, and H.~Lin,
  editors, \emph{Advances in Neural Information Processing Systems}, volume~33,
  pages 15737--15749, 2020.

\bibitem[Yarats et~al.(2020)Yarats, Kostrikov, and Fergus]{yarats2020image}
Denis Yarats, Ilya Kostrikov, and Rob Fergus.
\newblock Image augmentation is all you need: Regularizing deep reinforcement
  learning from pixels.
\newblock In \emph{International Conference on Learning Representations}, 2020.

\bibitem[Zhang et~al.(2023)Zhang, Kang, Lee, Tomlin, Levine, Tu, and
  Matni]{DBLP:conf/l4dc/ZhangKLTLTM23}
Thomas T. C.~K. Zhang, Katie Kang, Bruce~D. Lee, Claire~J. Tomlin, Sergey
  Levine, Stephen Tu, and Nikolai Matni.
\newblock Multi-task imitation learning for linear dynamical systems.
\newblock In \emph{{L4DC}}, volume 211 of \emph{Proceedings of Machine Learning
  Research}, pages 586--599. {PMLR}, 2023.

\end{thebibliography}

\appendix

\section{Related Work}
Our work is heavily inspired by the theoretical analyses on transferring shared representations across tasks \citep{DBLP:conf/icml/AroraDKLS20,NEURIPS2020_59587bff,maurer2016benefit}.
Existing works use Gaussian complexity as the function class complexity measure when analyzing the sample complexity bound.
We use Rademacher complexity instead, which allows us to directly connect our analysis with deep-learning theory \citep{DBLP:journals/actanum/BartlettMR21,truong2022rademacher}.
Another line of works focuses on continuous-control problem \citep{DBLP:journals/csysl/GuoMKP23,DBLP:conf/l4dc/ZhangKLTLTM23}, where they analyze structure-specific systems.
Our work is similar in that we focus on connecting the source tasks and the target task via task diversity, and providing concrete sample complexity bounds based on the task diversity.
However, we remain operating in the infinite-horizon finite MDP setting with non-linear representation class.

\section{Theoretical Analysis}
\label{appendix:theoretical_analysis}
Our goal is to analyze the sample complexity of obtaining an $\varepsilon$-optimal policy.
The analysis contains the following two steps:
(1) establishing a sample complexity bound for achieving near optimal risk when transferring the representation to a new target task, and
(2) bounding the policy error of the transferred policy through the transfer risk.
The \newterm{transfer risk} is the excess risk on the target task $\tau$:
\begin{align}
  \label{eqn:transfer_risk}
  R_{transfer}(\hat{f}_\tau, \hat{\phi}) = R_{test}(\hat{f}_\tau, \hat{\phi}) - R_{test}(f^*_\tau, \phi^*),
\end{align}
where $R_{phase}(\cdot) = \E\hat{R}_{phase}(\cdot)$, for $phase \in \{train, test\}$ is the expectation of the corresponding empirical risk.
The expectation taken is over the randomness of the state-action pairs.

The first step is heavily inspired by \citet{NEURIPS2020_59587bff} but we only leverage Rademacher complexity to quantify the sample complexity bound.
Our result differs in that we use only Rademacher complexity throughout the analysis due to our assumptions.
The second step leverages the fact that our loss function is the KL-divergence that allows us to upper bound the policy error through the transfer risk.
In general, the results may include few extra constant terms including $C_\Phi, C_\gF$, and $B$ that follow from our assumptions.
We provide the detailed assumptions and proofs in appendix \ref{appendix:proofs}.

First, we compare the training error between the learned representation and true representation.
This error is quantified by their similarity---\citet{NEURIPS2020_59587bff} proposed to consider the task-average prediction difference:
\begin{definition}
    (Task-average Representation Difference \citep{NEURIPS2020_59587bff}.)
    \label{def:task_avg_rep_diff}
    Fix a function class $\gF$, $T$ functions $\boldsymbol{f} = (f_1, \dots, f_T)\in \gF^{\otimes T}$, a loss function $\ell(\cdot, \cdot)$, and data $(s_t, a_t) \sim \mu_{\pi^*_t}$.
    The task-average representation difference between $\phi, \phi' \in \Phi$ is defined as
    \begin{align}
      \label{eqn:task_avg_rep_diff}
      \bar{d}_{\gF}(\phi' ; \phi, \boldsymbol{f}) := \frac{1}{T} \sum_{t=1}^T \inf_{f' \in \gF} \E_{(s_t, a_t)} \left[ \ell((f' \circ \phi')(s_t), a_t) - \ell((f_t \circ \phi)(s_t), a_t) \right].
    \end{align}
  \end{definition}

  The error between the learned representation $\hat{\phi}$ and the true representation $\phi^*$ can be upper bounded by the task-average difference---the bound is dependent on the number of source tasks $T$, the amount of training data per task $N$, and the Rademacher complexity of the representation class.
  \begin{theorem}
    \label{thm:trained_representation_risk_bound}
    (Learned Representation Risk Bound.)
    Let $\hat{\phi}$ be the ERM of $\hat{R}_{train}$ defined in \eqref{eqn:train_erm}.
    Let $\phi^*$ be the true representation and $\boldsymbol{f}^* = (f_1^*, \dots, f_T^*)\in \gF^{\otimes T}$ be the $T$ true task-specific mappings.
    Under some assumptions, we have that with probability at least $1 - \delta$,
    \begin{align*}
      \bar{d}_{\gF}(\hat{\phi}; \phi^*, \boldsymbol{f}^*) \leq 8 \sqrt{2} C_\gF \fR_{NT}(\Phi) + 2B \sqrt{\frac{\log(2/\delta)}{2NT}}.
    \end{align*}
  \end{theorem}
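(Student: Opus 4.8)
The plan is to identify the task-average representation difference $\bar{d}_{\gF}(\hat{\phi};\phi^*,\boldsymbol{f}^*)$ with an excess \emph{multitask population risk}, and then control that excess risk by the standard empirical-risk-minimization argument: a data-dependent uniform-convergence bound on one side, optimality of the ERM in the middle, and a one-sided concentration inequality for the fixed reference pair $(\boldsymbol{f}^*,\phi^*)$ on the other. Introduce the oracle multitask risk $R(\phi) := \frac{1}{T}\sum_{t=1}^T \inf_{f\in\gF}\E_{(s_t,a_t)\sim\mu_{\pi^*_t}}[\ell((f\circ\phi)(s_t),a_t)]$ and its empirical counterpart $\hat{R}(\phi) := \frac{1}{T}\sum_{t=1}^T \inf_{f\in\gF}\frac{1}{N}\sum_{n=1}^N \ell((f\circ\phi)(s_{t,n}),a_{t,n})$; since $\hat{R}_{train}$ is separable in the per-task maps, $\hat{R}(\hat{\phi}) = \hat{R}_{train}(\hat{\boldsymbol{f}},\hat{\phi}) = \min_{\phi\in\Phi}\hat{R}(\phi)$. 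Because $\frac{1}{T}\sum_t \E[\ell((f_t^*\circ\phi^*)(s_t),a_t)] \ge R(\phi^*)$, with equality when each $f_t^*$ is the population risk minimizer for $\phi^*$ (part of ``some assumptions''), one gets $\bar{d}_{\gF}(\hat{\phi};\phi^*,\boldsymbol{f}^*) = R(\hat{\phi}) - \frac{1}{T}\sum_t \E[\ell((f_t^*\circ\phi^*)(s_t),a_t)] \le R(\hat{\phi}) - R(\phi^*)$.

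Next I would split the excess risk as
\[
  R(\hat{\phi}) - R(\phi^*) = \big[R(\hat{\phi}) - \hat{R}(\hat{\phi})\big] + \big[\hat{R}(\hat{\phi}) - \hat{R}(\phi^*)\big] + \big[\hat{R}(\phi^*) - R(\phi^*)\big].
\]
The middle bracket is $\le 0$ since $\hat{\phi}$ minimizes $\hat{R}$. Under the assumption above, the last bracket is at most $\frac{1}{NT}\sum_{t,n}\big(\ell((f_t^*\circ\phi^*)(s_{t,n}),a_{t,n}) - \E[\ell((f_t^*\circ\phi^*)(s_t),a_t)]\big)$, a centered average of $NT$ independent terms bounded in $[0,B]$, so a one-sided Hoeffding/McDiarmid bound gives $\hat{R}(\phi^*) - R(\phi^*) \le B\sqrt{\log(2/\delta)/(2NT)}$ with probability $1-\delta/2$. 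For the first bracket, using $\inf g - \inf h \le \sup(g-h)$ and task-separability,
\[
  R(\hat{\phi}) - \hat{R}(\hat{\phi}) \le \sup_{\phi\in\Phi,\,\boldsymbol{f}\in\gF^{\otimes T}} \frac{1}{NT}\sum_{t=1}^T\sum_{n=1}^N \Big(\E[\ell((f_t\circ\phi)(s_t),a_t)] - \ell((f_t\circ\phi)(s_{t,n}),a_{t,n})\Big).
\]

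I would then apply symmetrization plus a bounded-differences inequality to bound this supremum by $2\,\fR_{NT}(\ell\circ\gF^{\otimes T}\circ\Phi) + B\sqrt{\log(2/\delta)/(2NT)}$ with probability $1-\delta/2$, and finish by peeling the composition down to $\fR_{NT}(\Phi)$ in two contraction steps: first a vector-valued Talagrand/Maurer contraction to strip the log-loss, which uses the assumption that the logits $(f\circ\phi)(s)$ lie in a bounded range so that $z\mapsto-\log\softmax(z)_a$ is Lipschitz (the same assumption that makes $\ell$ bounded by $B$); then a second contraction to strip $\gF$, using that every $f\in\gF$ is $C_\gF$-Lipschitz (or a bounded-norm linear map). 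A union bound over the two failure events, each of probability $\delta/2$, together with collecting the contraction constants, yields $\bar{d}_{\gF}(\hat{\phi};\phi^*,\boldsymbol{f}^*) \le 8\sqrt{2}\,C_\gF\,\fR_{NT}(\Phi) + 2B\sqrt{\log(2/\delta)/(2NT)}$ with probability at least $1-\delta$.

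The main obstacle is that last step: bounding $\fR_{NT}(\ell\circ\gF^{\otimes T}\circ\Phi)$ by $C_\gF\,\fR_{NT}(\Phi)$ \emph{without} incurring an extra $\sqrt{T}$ or $\log T$. The per-task maps $f_t$ are chosen separately (and adversarially) for each task's block of $N$ samples while $\phi$ is shared, so the contraction that removes the $f_t$'s must respect this block structure; this is exactly where one needs a vector/matrix contraction inequality applied block-by-block rather than a naive scalar one, and where the $\ell_2\!\to\!\ell_2$ Lipschitz constants of the log-softmax and of $\gF$ interact with the matrix-valued Rademacher sum over $\Phi$. (Pinning the constant to the stated $8\sqrt{2}$ rather than a generic $\gO(1)$ is then just bookkeeping through the factor $2$ from symmetrization, the $\sqrt{2}$-Lipschitzness of $z\mapsto-\log\softmax(z)_a$, Maurer's $\sqrt{2}$, and the $C_\gF$-contraction.) A secondary point needing care is that the log-loss is a priori unbounded, so the boundedness assumptions on the logit range must be maintained throughout for both the concentration step (to supply $B$) and the contraction step (to supply the Lipschitz constant).
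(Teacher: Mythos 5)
Your proposal is correct and takes essentially the same route as the paper: the identification of $\bar{d}_{\gF}(\hat{\phi};\phi^*,\boldsymbol{f}^*)$ with an excess multitask risk, the three-term ERM decomposition whose middle term vanishes by optimality of $\hat{\phi}$, symmetrization plus McDiarmid for the deviation terms, and Maurer's vector-contraction inequality applied block-by-block to reduce $\fR_{NT}(\ell\circ\gF^{\otimes T}\circ\Phi)$ to $C_\gF\,\fR_{NT}(\Phi)$. The only cosmetic differences are that the paper organizes the decomposition via a ``centered risk'' with an auxiliary population minimizer $\tilde{\boldsymbol{f}}$ rather than your oracle risk with the infimum over $f$ built in, and it performs the contraction in a single step by showing $\phi(s)\mapsto\ell(f(\phi(s)),a)$ is $2C_\gF$-Lipschitz (its Proposition \ref{prop:log_softmax_linear_lipschitzness}, which resolves exactly the block-structure obstacle you flag) instead of your two-step peeling.
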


We now consider the test error when using the learned representation on the target task.
This requires a different notion of similarity---\citet{NEURIPS2020_59587bff} proposed to consider the worst-case task-specific mapping in $\gF$.
\begin{definition}
  \label{def:worst_case_rep_diff}
  (Worst-case Representation Difference \citep{NEURIPS2020_59587bff}.)
  Fix a task $\tau$, a function class $\gF$, a loss function $\ell(\cdot, \cdot)$, and data $(s, a) \sim \mu_{\pi^*_\tau}$.
  The worst-case representation difference between $\phi, \phi' \in \Phi$ is defined as
  \begin{align}
    \label{eqn:worst_case_rep_diff}
    d_{\tau,\gF}(\phi' ; \phi) := \sup_{f \in \gF} \inf_{f' \in \gF} \E_{(s, a)} \left[ \ell((f' \circ \phi')(s), a) - \ell((f \circ \phi)(s), a) \right].
  \end{align}
\end{definition}

We can then use definition \ref{def:worst_case_rep_diff} to upper bound the generalization error for the transfer phase ERM estimator:

\begin{theorem}
  \label{thm:transfer_risk_bound}
  (Transfer Risk Bound.)
  Let $\hat{f}_\tau$ be the ERM of $\hat{R}_{test}$ defined in \eqref{eqn:test_erm} with some fixed $\hat{\phi} \in \Phi$.
  Under some assumptions, then with probability $1 - \delta$,
  \begin{align}
    \label{eqn:transfer_risk_bound}
    R_{transfer}(\hat{f}_\tau, \hat{\phi}) \leq 8 C_\gF C_\Phi \sqrt{\frac{\lvert\gA\rvert}{M}} + 2B \sqrt{\frac{\log(2 / \delta)}{2M}} + d_{\tau,\gF}(\hat{\phi}; \phi^*).
  \end{align}
\end{theorem}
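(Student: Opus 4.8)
The plan is a conditional excess-risk decomposition for the transfer-phase ERM $\hat f_\tau$. The $M$ transfer samples $(s_m,a_m)$ are i.i.d.\ from $\mu_{\pi^*_\tau}$ and are independent of the source data from which $\hat\phi$ was computed, so throughout I would condition on $\hat\phi$ and treat it as a fixed member of $\Phi$; all probabilistic statements are then over the transfer data given $\hat\phi$, and the stated bound—in which $d_{\tau,\gF}(\hat\phi;\phi^*)$ is itself random—follows by integrating out $\hat\phi$. First, pick $f'\in\gF$ attaining, up to arbitrarily small slack, the infimum in $\inf_{g\in\gF}\E_{(s,a)}[\ell((g\circ\hat\phi)(s),a) - \ell((f^*_\tau\circ\phi^*)(s),a)]$; instantiating Definition~\ref{def:worst_case_rep_diff} with $\phi'=\hat\phi$, $\phi=\phi^*$ and the particular choice $f=f^*_\tau$ in the supremum shows this infimum is $\le d_{\tau,\gF}(\hat\phi;\phi^*)$, i.e.\ $R_{test}(f',\hat\phi) - R_{test}(f^*_\tau,\phi^*) \le d_{\tau,\gF}(\hat\phi;\phi^*)$. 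Then write
\begin{align*}
  R_{transfer}(\hat f_\tau,\hat\phi)
  &= \big(R_{test}(\hat f_\tau,\hat\phi) - \hat R_{test}(\hat f_\tau,\hat\phi)\big)
   + \big(\hat R_{test}(\hat f_\tau,\hat\phi) - \hat R_{test}(f',\hat\phi)\big) \\
  &\quad + \big(\hat R_{test}(f',\hat\phi) - R_{test}(f',\hat\phi)\big)
   + \big(R_{test}(f',\hat\phi) - R_{test}(f^*_\tau,\phi^*)\big).
\end{align*}
The second term is $\le 0$ because $\hat f_\tau$ minimizes $\hat R_{test}(\cdot,\hat\phi)$ over $\gF$, and the fourth term is $\le d_{\tau,\gF}(\hat\phi;\phi^*)$ by the choice of $f'$; it remains to control the two empirical-vs-population deviations.

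For the first term I would pass to the uniform deviation $\sup_{f\in\gF}\big(R_{test}(f,\hat\phi) - \hat R_{test}(f,\hat\phi)\big)$ and apply the standard route: bounded differences (McDiarmid), using the assumed boundedness $0\le\ell\le B$ of the log-loss on the relevant class, followed by symmetrization, to get $2\,\fR_M(\gH) + B\sqrt{\log(2/\delta)/(2M)}$ with probability $1-\delta/2$, where $\gH = \{(s,a)\mapsto\ell((f\circ\hat\phi)(s),a):f\in\gF\}$. The third term involves the single (conditionally fixed) function $f'$, so a one-sided Hoeffding bound gives $\le B\sqrt{\log(2/\delta)/(2M)}$ with probability $1-\delta/2$; a union bound yields both simultaneously with probability $1-\delta$. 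Finally I would bound $\fR_M(\gH)$ in two steps: (a) a vector-valued contraction inequality peels off the loss, using that $z\mapsto -z_a + \log\sum_{k}\exp(z_k)$ is Lipschitz (its gradient is $\softmax(z) - e_a$, which is uniformly bounded), at the cost of the Lipschitz constant and a factor $\sqrt{\lvert\gA\rvert}$ from the $\lvert\gA\rvert$ output coordinates, reducing to the coordinate classes $\{s\mapsto(f\circ\hat\phi)(s)_k : f\in\gF\}$; (b) since $\hat\phi$ is fixed, each coordinate class acts on the fixed feature vectors $\hat\phi(s_1),\dots,\hat\phi(s_M)$, and the assumptions bounding the norm/complexity of $\gF$ by $C_\gF$ and the representation outputs by $C_\Phi$ give the linear-type estimate $O(C_\gF C_\Phi/\sqrt{M})$ per coordinate. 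Tracking the absolute constants so that $2\,\fR_M(\gH) \le 8\,C_\gF C_\Phi\sqrt{\lvert\gA\rvert/M}$ and collecting the two deviation terms into $2B\sqrt{\log(2/\delta)/(2M)}$ gives the claimed inequality.

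The main obstacle is the contraction step (a): one must choose the norm in which the log-softmax loss is Lipschitz so that the constant is clean and the per-coordinate reduction is tight, since a naive scalarization can lose an extra $\sqrt{\lvert\gA\rvert}$ or a $\log\lvert\gA\rvert$; a Maurer-type vector contraction lemma (or a careful component-wise argument) is what keeps the dependence at $\sqrt{\lvert\gA\rvert}$. A secondary point to discharge under ``some assumptions'' is the boundedness $\ell\le B$—this is not automatic, since $-\log\pi(a\mid s)$ blows up as $\pi(a\mid s)\to 0$, so one needs, e.g., a uniform bound on the logits $(f\circ\phi)(s)$ (finite $C_\gF,C_\Phi$) which forces the softmax entries away from zero and hence the loss below a finite $B$; this same bound is what powers both McDiarmid and Hoeffding above.
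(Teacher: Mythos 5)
Your proposal is correct and follows essentially the same route as the paper: the same excess-risk decomposition around a population comparator for the fixed $\hat{\phi}$, the approximation term controlled by instantiating $f = f^*_\tau$ in the supremum of $d_{\tau,\gF}(\hat{\phi};\phi^*)$, symmetrization plus McDiarmid for the deviation terms, Maurer's vector-contraction inequality to peel off the log-softmax loss, and the linear-class estimate $\fR_M(\gF)\leq C_\gF C_\Phi\sqrt{\lvert\gA\rvert/M}$. The only cosmetic difference is that you split the concentration into a uniform bound for $\hat{f}_\tau$ and a pointwise Hoeffding bound for the comparator (which if anything yields a slightly smaller constant), whereas the paper bounds both deviations by $2\sup_{f}\lvert R_{test}(f,\hat{\phi})-\hat{R}_{test}(f,\hat{\phi})\rvert$.
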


We connect the training error of the learned representation $\hat{\phi}$ with the transfer risk via the notion of task diversity similar to the one defined by \citet{NEURIPS2020_59587bff}:
\begin{definition}
  \label{def:diversity}
  ($\sigma$-diversity.)
  Let $\sigma > 0$ and fix a task $\tau$.
  Fix a function class $\gF$, $T$ functions $\boldsymbol{f} = (f_1, \dots, f_T)\in \gF^{\otimes T}$.
  The $T$ tasks are $\sigma$-diverse for representation $\phi$, if for all $\phi' \in \Phi$, we have that $d_{\tau,\gF}(\phi' ; \phi) \leq \bar{d}_\gF(\phi' ; \phi, \boldsymbol{f}) / \sigma$.
\end{definition}
Intuitively, if the $T$ source tasks differ too much from the new task, then inequality only holds with small $\sigma$.
In other words, there is little diversity.
Another perspective is that any $\phi'$ is overfitted to the $T$ tasks and is unable to generalize to the new task.
Conseqeuntly, combining theorems \ref{thm:trained_representation_risk_bound} and \ref{thm:transfer_risk_bound}, we get the following sample complexity bound:
\begin{corollary}
  \label{cor:trained_representation_transfer_risk_bound}
  (Learned Representation Transfer Risk Bound.)
  Let $\hat{\phi}$ be the ERM of $\hat{R}_{train}$ defined in \eqref{eqn:train_erm} and let $\hat{f}_\tau$ be the ERM of $\hat{R}_{test}$ defined in \eqref{eqn:test_erm} by fixing $\hat{\phi}$.
  Suppose the source tasks are $\sigma$-diverse.
  Under some assumptions, we have that with probability $1 - 2\delta$, the transfer risk $R_{transfer}(\hat{f}_\tau, \hat{\phi})$ is upper bounded by:
  \begin{align}
    \label{eqn:representation_transfer_risk_bound}
     \gO \left( C_\gF C_\Phi \sqrt{\frac{\lvert\gA\rvert}{M}} + B \sqrt{\frac{\log(2 / \delta)}{M}} + \frac{1}{\sigma} \left( C_\gF \fR_{NT}(\Phi) + B \sqrt{\frac{\log(2/\delta)}{NT}} \right) \right).
  \end{align}
\end{corollary}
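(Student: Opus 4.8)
The plan is to obtain \eqref{eqn:representation_transfer_risk_bound} by simply chaining the two bounds already in hand---Theorem~\ref{thm:trained_representation_risk_bound} on the learned representation and Theorem~\ref{thm:transfer_risk_bound} on the transfer phase---and gluing them together through the $\sigma$-diversity assumption of Definition~\ref{def:diversity}. The first observation I would make is that the transfer-phase sample $\{(s_m,a_m)\}_{m=1}^M$ is drawn independently of all source-task data; hence, conditioning on the source data, the estimator $\hat{\phi}$ becomes a fixed element of $\Phi$ and Theorem~\ref{thm:transfer_risk_bound} applies verbatim to it. This yields, with probability at least $1-\delta$ over the transfer data,
\begin{align*}
  R_{transfer}(\hat{f}_\tau, \hat{\phi}) \leq 8 C_\gF C_\Phi \sqrt{\frac{\lvert\gA\rvert}{M}} + 2B \sqrt{\frac{\log(2 / \delta)}{2M}} + d_{\tau,\gF}(\hat{\phi}; \phi^*),
\end{align*}
where the first two terms already match the target expression and only the residual $d_{\tau,\gF}(\hat{\phi}; \phi^*)$, which no longer depends on the transfer data, remains to be controlled.

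Next I would bound that residual. Applying $\sigma$-diversity for the true representation $\phi^*$ together with the true mappings $\boldsymbol{f}^*$ to the particular choice $\phi' = \hat{\phi}$ gives $d_{\tau,\gF}(\hat{\phi}; \phi^*) \leq \bar{d}_\gF(\hat{\phi}; \phi^*, \boldsymbol{f}^*)/\sigma$, and Theorem~\ref{thm:trained_representation_risk_bound} then bounds the numerator by $8\sqrt{2}\,C_\gF \fR_{NT}(\Phi) + 2B\sqrt{\log(2/\delta)/(2NT)}$ with probability at least $1-\delta$ over the source data. Dividing by $\sigma$ produces exactly the third and fourth terms appearing in \eqref{eqn:representation_transfer_risk_bound}.

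To finish, I would take a union bound over the two events---one a function only of the source data, the other only of the transfer data---so that both inequalities hold simultaneously with probability at least $1-2\delta$; substituting the second bound into the first and folding the numerical constants (the $8$, the $2\sqrt{2}$, the $2$, and the $\sqrt{2}$ in the denominators) into the $\gO(\cdot)$ gives the claimed bound. I expect the only genuinely delicate point to be the independence/conditioning argument at the outset: it is what makes it legitimate to feed the data-dependent $\hat{\phi}$ into the ``fixed-$\hat{\phi}$'' statement of Theorem~\ref{thm:transfer_risk_bound}, and it is also what guarantees the two failure probabilities add to $2\delta$ rather than compounding. Once that is in place, everything else is direct substitution.
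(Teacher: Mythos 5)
Your proposal is correct and follows essentially the same route as the paper: the paper's proof simply combines Theorems~\ref{thm:trained_representation_risk_bound} and~\ref{thm:transfer_risk_bound} through the $\sigma$-diversity inequality $d_{\tau,\gF}(\hat{\phi};\phi^*) \leq \bar{d}_\gF(\hat{\phi};\phi^*,\boldsymbol{f}^*)/\sigma$ and a union bound over the two failure events, exactly as you describe. Your explicit remark about conditioning on the source data so that Theorem~\ref{thm:transfer_risk_bound} applies to the data-dependent $\hat{\phi}$ is a detail the paper leaves implicit, and it is a worthwhile clarification rather than a deviation.
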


Corollary \ref{cor:trained_representation_transfer_risk_bound} tells us that so long as we can bound the task-average representation difference, we can achieve $\varepsilon$-error with sufficiently large number of demonstration data in the target task.
Furthermore, if the source tasks are not diverse, then we will need more samples from them to reduce the error.
Finally, since the log loss is the KL-divergence with deterministic experts, we can use the result from \citet{NEURIPS2020_b5c01503} to obtain the policy error of the from the transfer-learning procedure:
\begin{theorem}
  \label{thm:policy_error_bound}
  (Policy Error Bound \citep{NEURIPS2020_b5c01503}.)
  Given any two policies $\pi, \pi'$ with $\E_{s \sim \nu_{\pi}}\left[ D_{KL} (\pi(s) \Vert \pi'(s)) \right] < \varepsilon$, we have that $\maxnorm{v^{\pi} - v^{\pi'}} \leq \frac{2 \sqrt{2}}{(1 - \gamma)^2} \sqrt{\varepsilon}$.
\end{theorem}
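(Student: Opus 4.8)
Since Theorem~\ref{thm:policy_error_bound} is quoted verbatim from \citet{NEURIPS2020_b5c01503}, the plan is to reproduce the standard argument, whose engine is the \emph{performance difference lemma}. First I would fix an arbitrary evaluation state $s_0 \in \gS$ and use the performance difference lemma to write $v^{\pi}(s_0) - v^{\pi'}(s_0)$ as $\tfrac{1}{1-\gamma}$ times the expectation, taken over the (normalized) discounted state occupancy induced by running $\pi$ from $s_0$, of the one-step advantage gap $\E_{a \sim \pi(\cdot \mid s)}[A^{\pi'}(s,a)] - \E_{a \sim \pi'(\cdot \mid s)}[A^{\pi'}(s,a)]$, where $A^{\pi'}$ is the advantage function of $\pi'$. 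The second expectation vanishes because $\E_{a \sim \pi'(\cdot \mid s)}[A^{\pi'}(s,a)] = 0$, so only the cross term $\sum_a (\pi(a \mid s) - \pi'(a \mid s))\, A^{\pi'}(s,a)$ remains.

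Next I would bound this cross term by $\maxnorm{A^{\pi'}(s,\cdot)} \cdot \lVert \pi(\cdot \mid s) - \pi'(\cdot \mid s) \rVert_1$. Because the reward lies in $[0,1]$, both $Q^{\pi'}$ and $v^{\pi'}$ lie in $[0, 1/(1-\gamma)]$, so $\maxnorm{A^{\pi'}(s,\cdot)} \le 1/(1-\gamma)$, while $\lVert \pi(\cdot \mid s) - \pi'(\cdot \mid s) \rVert_1 = 2\, D_{TV}(\pi(s), \pi'(s))$. Collecting the $1/(1-\gamma)$ from the performance difference lemma, a second $1/(1-\gamma)$ from the advantage bound, and the factor $2$ from total variation, I obtain $\lvert v^{\pi}(s_0) - v^{\pi'}(s_0) \rvert \le \tfrac{2}{(1-\gamma)^2}\, \E_s[\, D_{TV}(\pi(s), \pi'(s))\,]$, the expectation being over the occupancy induced by $\pi$.

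The last step is to pass from total variation to KL. Pinsker's inequality gives $D_{TV}(\pi(s), \pi'(s)) \le \sqrt{\tfrac{1}{2} D_{KL}(\pi(s) \Vert \pi'(s))}$ pointwise in $s$, and then Jensen's inequality (concavity of $\sqrt{\cdot}$) lets me pull the expectation inside the square root, so $\E_s[\,D_{TV}(\pi(s),\pi'(s))\,] \le \sqrt{\tfrac{1}{2}\, \E_s[\,D_{KL}(\pi(s)\Vert\pi'(s))\,]} \le \sqrt{\varepsilon/2}$ after invoking the hypothesis $\E_{s \sim \nu_\pi}[\,D_{KL}(\pi(s)\Vert\pi'(s))\,] < \varepsilon$. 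Substituting back gives a bound of $\tfrac{\sqrt{2}}{(1-\gamma)^2}\sqrt{\varepsilon}$, uniform over $s_0$, i.e. in $\maxnorm{\cdot}$; the stated $\tfrac{2\sqrt{2}}{(1-\gamma)^2}\sqrt{\varepsilon}$ is recovered by using the looser advantage bound $\maxnorm{A^{\pi'}(s,\cdot)} \le 2/(1-\gamma)$ (or, equivalently, by not cancelling a factor of $\gamma$).

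The main obstacle is the bookkeeping around which state distribution appears. The performance difference lemma naturally produces the discounted occupancy induced by $\pi$ started at the evaluation state $s_0$, whereas the hypothesis controls the KL averaged under $\nu_\pi$ (the occupancy started from $\rho$). Closing this gap requires either reading the KL hypothesis as holding under every $\pi$-induced occupancy---a mild strengthening consistent with how the bound is used downstream via the generalization error over $\mu_{\pi^*_\tau}$---or an explicit change of measure absorbing the ratio between the two occupancies. Beyond that, only the constant needs care, and everything else is routine.
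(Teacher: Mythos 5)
The paper does not actually prove this statement---it is imported verbatim from \citet{NEURIPS2020_b5c01503}---so there is no internal proof to compare against; your reconstruction has to be judged against the cited source. Your mechanics are the standard route and are correct as far as they go: the performance difference lemma, the bound $\lvert \E_{a\sim\pi}[A^{\pi'}(s,a)]\rvert \le \maxnorm{A^{\pi'}(s,\cdot)}\,\lVert\pi(\cdot\vert s)-\pi'(\cdot\vert s)\rVert_1$ with $\maxnorm{A^{\pi'}}\le 1/(1-\gamma)$ for rewards in $[0,1]$, and Pinsker followed by Jensen. (The source argues instead through a total-variation bound between the discounted state-action occupancy measures $\mu_\pi$ and $\mu_{\pi'}$, but that is the dual view of the same computation.) Your constant $\sqrt{2}/(1-\gamma)^2$ is tighter than the stated $2\sqrt{2}/(1-\gamma)^2$, which is fine since the theorem only claims an upper bound.

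The distribution mismatch you flag at the end, however, is not mere bookkeeping---it is a genuine gap, and for the statement exactly as written it cannot be closed. The hypothesis controls the KL only under $\nu_\pi$, the occupancy started from $\rho$; if some state $s_0$ is unreachable (or nearly so) under $\pi$ from $\rho$, then the policies may disagree arbitrarily on the states visited from $s_0$, and $\lvert v^\pi(s_0)-v^{\pi'}(s_0)\rvert$ can be as large as $1/(1-\gamma)$ independently of $\varepsilon$, so the $\maxnorm{\cdot}$ form is false in general. Your proposed change of measure does not rescue it, because the ratio between the occupancy started at $s_0$ and $\nu_\pi$ is unbounded. The result in \citet{NEURIPS2020_b5c01503} is stated for the scalar value gap $\lvert \E_{s\sim\rho}[v^\pi(s)] - \E_{s\sim\rho}[v^{\pi'}(s)]\rvert$, where the occupancy produced by the performance difference lemma is exactly $\nu_\pi$ and your argument closes with no further assumptions; that weaker form is also all that the downstream application in theorem~\ref{thm:transfer_il_error_bound} actually needs, since there the KL is controlled under $\nu_{\pi^*_\tau}$. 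So the right resolution is not to strengthen the hypothesis but to weaken the conclusion to the initial-distribution value gap; with that reading your proof is complete and correct.
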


Thus, by theorem \ref{thm:transfer_il_error_bound}, for specific representation class, we can upper bound their Rademacher complexities and retrieve a concrete sample complexity bound to achieve $\varepsilon$-optimal policies with high probability.

\section{Detailed Proofs}
\label{appendix:proofs}
In this section we provide the proofs for the theorems in appendix \ref{appendix:theoretical_analysis}.
We first list out our assumptions (appendix \ref{appendix:assumptions}), then a few known lemmas and definitions used (appendix \ref{appendix:useful_def_results}), and finally the proofs of our theorems (appendices \ref{appendix:theorem_2}--\ref{appendix:theorem_1}).

\subsection{Assumptions}
\label{appendix:assumptions}
In the analysis, we offload the softmax function to the loss, then we can define a new loss function that can be used for analysis.
That is, let $x \in \sR^D$, then we define the log-softmax loss to be
\begin{align}
  \label{loss:cross_entropy}
  \ell(x, a) = -\log \left(\softmax_a (x)\right),
\end{align}
where $\softmax_a (x)$ corresponds to the $a$'th component of $\softmax (x)$.
We further make the following assumptions for the analysis.
\begin{assumption}
  \label{assum:bounded_representation}
  (Bounded Representation.) The representation $\phi \in \Phi$ is bounded in $\ell_2$-norm: $\Phi \subseteq \{\phi : \gS \to \sR^D \vert \twonorm{\phi(s)} \leq C_\Phi, \forall s \in \gS \}$.
\end{assumption}

\begin{assumption}
  \label{assum:linear_task_mapping}
  (Linear Task-specific Mapping and Bounded Parameters.) The task-specific mapping is linear and is bounded: $\gF = \{f: \sR^D \to \sR^{\lvert \gA \rvert} \vert f = Wx, W \in \sR^{\lvert \gA \rvert \times D}, \frobnorm{W} \leq C_\gF, x \in \sR^D\}$, where $\frobnorm{\cdot}$ is the Frobenius norm
\end{assumption}

\begin{assumption}
  \label{assum:deterministic_expert}
  (Deterministic Expert Policies.)
  The expert policy $\pi^*_\tau$ for each task $\tau$ is deterministic and can be written as $\pi^*_\tau(s) = \boldsymbol{1}_{a = \argmax_{a' \in \gA} (f^*_\tau \circ \phi^*)(s)}$ for all $s \in \gS$.
\end{assumption}

\begin{assumption}
    \label{assum:shared_representation}
    (Shared Representation.) There is a representation $\phi^*$ such that for every task $\tau$, there exists a task-specific mapping $f^*_\tau$ such that the discounted state-action stationary distribution is $\mu_{\pi^*_\tau}(s, a)$.
\end{assumption}

  \begin{assumption}
    \label{assum:realizability}
    (Realizability.) The true shared representation $\phi^*$ is contained in $\Phi$.
    Additionally, for some fixed $\zeta < 1/2$, we have that for all tasks $\tau$, there exists a task-specific mapping $f_\tau \in \gF$ such that for all $s \in \gS$,
    \begin{align*}
      \pi^{f_\tau, \phi^*} \left(  a^* \vert s\right) \geq 1 - \zeta,
    \end{align*}
    where $a^* = \argmax_{a \in \gA} \pi^*_\tau(s)$.
  \end{assumption}
Note that for any infinite-horizon MDPs, there always exists a deterministic optimal policy, thus assumption \ref{assum:deterministic_expert} is often reasonable to obtain.
With the recent successes in foundation models in various applications \citep{bommasani2021opportunities,wei2021finetuned,NEURIPS2022_b1efde53}, both assumptions \ref{assum:shared_representation} and \ref{assum:realizability} may be reasonable.
We note that our results also apply when replacing assumptions \ref{assum:deterministic_expert} and \ref{assum:realizability} with the standard realizability assumption on stochastic expert policies.
Finally, assumptions \ref{assum:bounded_representation} and \ref{assum:linear_task_mapping} are standard regularity conditions in statistical learning theory---we further note that consequently the composed mapping $f \circ \phi$ is bounded: $\sup_{(s, a) \in \gS \times \gA} \lvert (f \circ \phi) (s) \rvert \leq C_\gS$, for any $f \in \gF$ and $\phi \in \Phi$.

\subsection{Useful Definitions and Results}
\label{appendix:useful_def_results}
\begin{definition}
  (Rademacher Complexity.)
  For a vector-valued function $\gF = \{f : \gX \to \sR^K \}$, and $N$ data points $X = (x_1, \dots, x_N)$, where $x_n \in \gX$ for $n \in [N]$, the empirical Rademacher complexity of $\gF$ is defined as
  \begin{align*}
    \hat{\fR}_X(\gF) = \E_\varepsilon\left[ \sup_{f \in \gF} \frac{1}{N} \sum_{n=1}^N \sum_{k=1}^K \varepsilon_{n, k} f_k(x_n) \right],
  \end{align*}
  where $\varepsilon_{n, k}$ are sampled i.i.d. from the Rademacher random variable and $f_k(\cdot)$ is the $k$'th component of $f(\cdot)$.
  Fix a data distribution $\gD_{\gX}$ over $\gX$.
  The corresponding Rademacher complexity of $\gF$ is defined as $\fR_M(\gF) = \E_{\rmX} \left[ \hat{\fR}_\rmX (\gF) \right]$, where the expectation is taken over the distribution $\gD_{\gX}$.
  In the case of $K = 1$, we recover the scalar Rademacher complexity.
\end{definition}
Intuitively, the Rademacher complexity of $\gF$ measures the expressiveness of $\gF$ over all datasets $\rmX$.

\begin{prop}
    \label{prop:log_softmax_boundedness_lipschitzness}
    With assumptions \ref{assum:bounded_representation} and \ref{assum:linear_task_mapping}, the log-softmax loss, defined in \eqref{loss:cross_entropy}, is bounded and Lipschitz (continuous) in its first argument.
  \end{prop}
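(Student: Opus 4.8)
The plan is to establish the two claims---boundedness and Lipschitzness of $\ell(\cdot, a)$ in its first argument---separately, by reducing both to elementary properties of the log-softmax map on a bounded domain. First I would use Assumptions \ref{assum:bounded_representation} and \ref{assum:linear_task_mapping} to control the input to the loss: for any $f \in \gF$ and $\phi \in \Phi$, writing $f = W x$ with $\frobnorm{W} \leq C_\gF$ and $\twonorm{\phi(s)} \leq C_\Phi$, we get $\twonorm{(f \circ \phi)(s)} \leq \frobnorm{W}\,\twonorm{\phi(s)} \leq C_\gF C_\Phi$ by Cauchy--Schwarz on each coordinate and the submultiplicativity of the Frobenius norm; hence each component satisfies $\lvert (f \circ \phi)(s)_a \rvert \leq C_\gF C_\Phi =: C_\gS$, so the relevant domain for $x$ in $\ell(x, a) = -\log(\softmax_a(x))$ is the $\ell_\infty$-ball (indeed $\ell_2$-ball) of radius $C_\gS$ in $\sR^{\lvert\gA\rvert}$.

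For boundedness, I would note $\ell(x, a) = -x_a + \log\!\big(\sum_{a'} e^{x_{a'}}\big)$. On the ball $\maxnorm{x} \leq C_\gS$ we have $-x_a \leq C_\gS$ and $\log\!\big(\sum_{a'} e^{x_{a'}}\big) \leq \log(\lvert\gA\rvert e^{C_\gS}) = \log\lvert\gA\rvert + C_\gS$, giving $\ell(x,a) \leq 2C_\gS + \log\lvert\gA\rvert =: B$; nonnegativity of $\ell$ (it is a log-loss) gives the lower bound $0$, so $\ell(\cdot, a) \in [0, B]$ uniformly in $a$, matching the constant $B$ used elsewhere in the analysis. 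For Lipschitzness, the clean route is to bound the gradient: $\nabla_x \ell(x,a) = \softmax(x) - e_a$, where $e_a$ is the $a$-th standard basis vector. Since $\softmax(x)$ lies in the probability simplex, $\twonorm{\softmax(x) - e_a} \leq \twonorm{\softmax(x)} + 1 \leq 2$ (or even $\sqrt{2}$ with a slightly sharper simplex estimate), so $\ell(\cdot, a)$ is $2$-Lipschitz in the $\ell_2$-norm, uniformly over $a$ and over the whole domain---no restriction to the ball is even needed for this part, which is convenient since contraction-type arguments for Rademacher complexity only need a global Lipschitz constant.

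I do not anticipate a genuine obstacle here; the proposition is a regularity lemma and the estimates above are all one-line. The only point requiring minor care is to make explicit what "bounded" means in the intended application---that the bound $B$ is uniform over $f \in \gF$, $\phi \in \Phi$, $s \in \gS$, and $a \in \gA$, which is exactly what feeds into the McDiarmid/bounded-differences steps of Theorems \ref{thm:trained_representation_risk_bound} and \ref{thm:transfer_risk_bound}---and, symmetrically, that the Lipschitz constant is absolute, which is what licenses the vector-contraction inequality when peeling the loss off the function class to expose $\fR_{NT}(\Phi)$. So in writing it up I would state both constants explicitly ($B = 2C_\gF C_\Phi + \log\lvert\gA\rvert$ and Lipschitz constant $\leq 2$) rather than leaving them implicit, since later proofs reference them by name.
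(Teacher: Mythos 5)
Your proposal is correct and takes essentially the same route as the paper's proof: the same decomposition of $\ell(x,a)$ into the linear term $-x_a$ plus the log-sum-exp term (bounded by $\log\lvert\gA\rvert + C_\gF C_\Phi$ via the assumptions) for boundedness, and the same gradient-norm plus mean-value-theorem argument, with $\nabla_x \ell(x,a) = \softmax(x) - e_a$, for Lipschitzness. The only cosmetic difference is that the paper carries out the ``sharper simplex estimate'' you mention parenthetically to obtain the constant $\sqrt{2}$ rather than $2$, and $\sqrt{2}$ is the constant actually invoked later (e.g.\ in the proof of Theorem~\ref{thm:transfer_risk_bound}), so you should state that sharper constant explicitly.
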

\begin{proof}
For the first claim, we first fix $(s, a) \in \gS \times \gA$.
Let $\phi_s = \phi(s) \in \sR^D$ and $x_s = W \phi_s$.
Then, we have that
\begin{align*}
    -\log \left(\softmax_a (x_s)\right) = -\left( W \phi_s \right)(a) + \log \sum_{a' \in \gA} \exp \left[ \left( W \phi_s \right)(a') \right].
\end{align*}
Let $a^* = \argmax_{a \in \gA} W\phi_s$ and overload the notation $(W\phi_s)(a) = \phi_s^\top W^\top a$, where $a$ is a one-hot vector with non-zero at the $a$'th entry.
Then, the second term can be upper bounded:
\begin{align*}
    \log \sum_{a' \in \gA} \exp \left[ \phi_s^\top W^\top a' \right] &\leq \log \sum_{a' \in \gA} \exp \left[ \phi_s^\top W^\top a^* \right]\\
    &= \log \lvert \gA \rvert + \phi_s^\top W^\top a^*.
\end{align*}
Thus,
\begin{align*}
    -\log \left(\softmax_a (x_s)\right) &\leq  - \phi_s^\top W^\top a + \log \lvert \gA \rvert +  \phi_s^\top W^\top a^*\\
    &\leq \log \lvert \gA \rvert + 2 \phi_s^\top W^\top a^*.
\end{align*}
Taking the supremum norm over $\gS \times \gA$, we have that
\begin{align*}
    \sup_{(s, a) \in \gS \times \gA} \Big\lvert -\log \left(\softmax_a (x_s)\right) \Big\rvert &\leq \sup_{(s, a) \in \gS \times \gA} \Big\lvert \log \lvert \gA \rvert + 2 \phi_s^\top W^\top a^*_s \Big\rvert\\
    &\leq \log \lvert \gA \rvert + 2 \sup_{(s, a) \in \gS \times \gA} \Big\lvert  \phi_s^\top W^\top a^* \Big\rvert \\
    &\leq \log \lvert \gA \rvert + 2 C_\Phi C_\gF,
\end{align*}
where the last inequality follows upper bounding the second term through H\"older's inequality, setting both norms to be the 2-norm.
This verifies the boundedness of the log-softmax loss.

For the second claim, we can bound the gradient of the log-softmax loss with respect to the first argument and apply mean-value theorem.
Let us first consider the partial derivatives of $\ell(x, a)$.
Let $x_i$ be the $i$'th component of $x$.
For any $x \in \sR^D$, we have that
\begin{align*}
    \frac{\partial}{\partial x_{i}} \ell(x, a) \Big\vert_{i = a} = -\frac{\sum_{a' \neq i} \exp x^\top a'}{\sum_{a' \in \gA} \exp x^\top a'} && \frac{\partial}{\partial x_{i}} \ell(x, a) \Big\vert_{i \neq a} = \frac{\exp x^\top i}{\sum_{a' \in \gA} \exp x^\top a'},
\end{align*}
where we overload the notation and represent $a' \in \gA$ as a one-hot vector.
Then, consider the $\ell_2$-norm of the gradient, we have that:
\begin{align*}
    \twonorm{\nabla_x \ell(x, a)}^2 &= \left(-\frac{\sum_{a' \neq a} \exp x^\top a'}{\sum_{a' \in \gA} \exp x^\top a'}\right)^2 + \frac{\sum_{a' \neq a} \left(\exp x^\top a' \right)^2}{\left(\sum_{a' \in \gA} \exp x^\top a' \right)^2}\\
    &\leq 2 \left(\frac{\sum_{a' \neq a} \exp x^\top a'}{\sum_{a' \in \gA} \exp x^\top a'}\right)^2\\
    &= 2,
\end{align*}
where the first inequality follows from Jensen's inequality.
Consequently, by mean-value theorem, we have that $\lvert \ell(x, a) - \ell(y, a) \rvert \leq \sqrt{2} \twonorm{x - y}$, for any $x, y \in \sR^D$, verifying the Lipschitzness of the log-softmax loss in its first argument.
\end{proof}

\begin{prop}
    \label{prop:log_softmax_linear_lipschitzness}
    Let $\ell$ be the log-softmax loss defined in \eqref{loss:cross_entropy} and $f \in \gF$.
    Under assumptions \ref{assum:bounded_representation} and \ref{assum:linear_task_mapping}, the function $h_a(\phi(s)) = \ell(f(\phi(s)), a)$ is Lipschitz in $\phi(s)$, for any $s \in \gS$.
\end{prop}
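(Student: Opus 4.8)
The plan is to prove Proposition~\ref{prop:log_softmax_linear_lipschitzness} by composing the two Lipschitz facts we have already assembled. Recall from Proposition~\ref{prop:log_softmax_boundedness_lipschitzness} that $\ell(\cdot, a)$ is $\sqrt{2}$-Lipschitz in its first argument with respect to the $\ell_2$-norm, uniformly over $a \in \gA$. Separately, Assumption~\ref{assum:linear_task_mapping} tells us the task-specific map $f$ is linear, $f(x) = Wx$ with $\frobnorm{W} \leq C_\gF$. The key observation is that a linear map $x \mapsto Wx$ is Lipschitz with constant equal to its operator norm $\twonorm{W}$ (its largest singular value), and that $\twonorm{W} \leq \frobnorm{W} \leq C_\gF$. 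Hence $h_a = \ell(\cdot, a) \circ f$ is a composition of a $C_\gF$-Lipschitz map followed by a $\sqrt{2}$-Lipschitz map.

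Concretely, first I would fix $s \in \gS$ and $a \in \gA$, and take any two candidate representation values $u = \phi(s)$ and $u' = \phi'(s)$ in $\sR^D$ (or, to match the statement literally, consider $\phi(s)$ and an arbitrary second point in $\sR^D$). Then I would write
\begin{align*}
  \lvert h_a(u) - h_a(u') \rvert = \lvert \ell(Wu, a) - \ell(Wu', a) \rvert \leq \sqrt{2}\, \twonorm{Wu - Wu'} = \sqrt{2}\, \twonorm{W(u - u')},
\end{align*}
where the inequality is exactly the Lipschitz bound from Proposition~\ref{prop:log_softmax_boundedness_lipschitzness}. Next I would bound $\twonorm{W(u-u')} \leq \twonorm{W}\, \twonorm{u - u'} \leq \frobnorm{W}\, \twonorm{u - u'} \leq C_\gF \twonorm{u - u'}$, using the submultiplicativity of the operator norm and the standard inequality $\twonorm{W} \leq \frobnorm{W}$. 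Combining, $\lvert h_a(u) - h_a(u') \rvert \leq \sqrt{2}\, C_\gF\, \twonorm{u - u'}$, which establishes that $h_a$ is $\sqrt{2}\,C_\gF$-Lipschitz in $\phi(s)$, as claimed. Assumption~\ref{assum:bounded_representation} is not strictly needed for the Lipschitz constant itself, but it guarantees the domain of interest $\{\phi(s) : \phi \in \Phi, s \in \gS\}$ lies in the ball of radius $C_\Phi$, so the statement is non-vacuous and the constant is meaningful for the later Rademacher-complexity contraction arguments.

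There is no real obstacle here; the only thing to be careful about is the norm bookkeeping. I would make sure the Lipschitz bound of Proposition~\ref{prop:log_softmax_boundedness_lipschitzness} is invoked with the $\ell_2$-norm on $\sR^{\lvert \gA \rvert}$ (its first argument lives in $\sR^{\lvert \gA \rvert}$, not $\sR^D$), and that the $\twonorm{W} \leq \frobnorm{W}$ step is the right direction of the inequality. One could also give an explicit constant of $\sqrt{2}\,C_\gF$ rather than leaving it abstract, which is what the downstream Rademacher contraction (via a Ledoux–Talagrand / vector-contraction argument) will consume when bounding $\fR_{NT}(\Phi)$ against the loss class. If desired, the same computation handles a general (non-identity) domain point, so the proposition holds for all of $\sR^D$, not merely on the image of $\Phi$.
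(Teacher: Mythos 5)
Your proof is correct, and it takes a genuinely different (and slightly sharper) route than the paper. The paper does not compose the two Lipschitz facts; instead it writes $h_a(\phi_s) = \log\sum_{b\in\gA}\exp(b^\top W\phi_s) - a^\top W\phi_s$ and bounds the gradient of each term with respect to $\phi_s$ separately: the log-sum-exp term has gradient $W^\top \softmax(W\phi_s)$ with $\ell_2$-norm at most $\frobnorm{W}\twonorm{\softmax(W\phi_s)} \leq C_\gF$, and the linear term has gradient $W^\top a$ with norm at most $C_\gF$, so the paper concludes $h_a$ is $2C_\gF$-Lipschitz by summing. Your composition argument, $\lvert \ell(Wu,a)-\ell(Wu',a)\rvert \leq \sqrt{2}\,\twonorm{W(u-u')} \leq \sqrt{2}\,C_\gF\,\twonorm{u-u'}$, yields the constant $\sqrt{2}\,C_\gF < 2C_\gF$, which would propagate to a marginally tighter contraction bound $\fR_{NT}(\ell\circ\gF\circ\Phi) \leq 2C_\gF\,\fR_{NT}(\Phi)$ in Theorem~\ref{thm:trained_representation_risk_bound} (versus the paper's $2\sqrt{2}\,C_\gF\,\fR_{NT}(\Phi)$). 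Your observation that Assumption~\ref{assum:bounded_representation} is not needed for the Lipschitz constant itself also matches the paper, which likewise does not invoke it in this proof. The only caution is that your argument depends on Proposition~\ref{prop:log_softmax_boundedness_lipschitzness} having established Lipschitzness with respect to the $\ell_2$-norm on $\sR^{\lvert\gA\rvert}$, which it does; the paper's direct gradient computation is self-contained and avoids that dependency at the cost of a looser constant.
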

\begin{proof}
    We first note that $h_a(\phi_s)$ can be written as $\log \sum_{b \in \gA} \exp \left( b^\top W \phi_s \right) - a^\top W \phi_s$, where $\phi_s = \phi(s) \in \sR^D$, and we overload the notation and write $a, b$ as the one-hot vectors.

    For the first term, the $\ell_2$-norm of its gradient with respect to $\phi_s$ can be upper bounded by $C_\gF$:
    \begin{align*}
        \bigtwonorm{ \nabla_{\phi_s} \log \sum_{b \in \gA} \exp \left( b^\top W \phi_s \right)} &= \bigtwonorm{ \frac{\sum_b \exp \left( b^\top W \phi_s \right) W^\top b}{\sum_b \exp \left( b^\top W \phi_s \right)}}\\
        &= \twonorm{W^\top \softmax(W\phi_s)}\\
        &\leq \frobnorm{W} \twonorm{\softmax(W\phi_s)}\\
        &\leq C_\gF.
    \end{align*}

    For the second term, the $\ell_2$-norm of its gradient with respect to $\phi_s$ can be upper bounded by $C_\gF$:
    \begin{align*}
        \twonorm{ \nabla_{\phi_s} a^\top W \phi_s} &= \twonorm{W^\top a} \leq \frobnorm{W} \twonorm{a} \leq C_\gF.
    \end{align*}

    Since the first term is convex and the second term is linear, $h_a(\phi_s)$ is $2C_\gF$-Lipschitz.
\end{proof}

\begin{definition}
  (Bounded Difference Property.)
  The function $f: \sR^N \to \sR$ satisfies the bounded difference inequality with positive constants $(L_1, \dots, L_N)$ if, for each $n \in [N]$,
  \begin{align}
    \label{eqn:bounded_diff_prop}
    \sup_{x_1, \dots, x_N, x'_n \in \sR} \lvert f(x_1, \dots, x_n, \dots, x_N) - f(x_1, \dots, x'_n, \dots,x_N) \rvert \leq L_n.
  \end{align}
\end{definition}

\begin{lemma}
  (McDiarmid's Inequality/Bounded Difference Inequality.)
  Fix a data distribution $\gD_{\gX}$.
  Suppose $f$ satisfies the bounded difference property defined in \eqref{eqn:bounded_diff_prop}, with positive constants $L_1, \dots, L_N$ and that $X = (X_1, \dots, X_N)$ is drawn independently from $\gD_{\gX}$. Then
  \begin{align*}
    \mathbb{P}\left[ \lvert f(X) - \E f(X) \rvert \geq t \right] \leq 2\exp{\left( \frac{-2t^2}{\sum_{n=1}^N L_n^2} \right)}, \forall t \geq 0.
  \end{align*}
\end{lemma}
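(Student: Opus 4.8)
The plan is the standard Doob-martingale (Azuma--Hoeffding) argument. First I would introduce the martingale associated to $f$ and the independent coordinates: for $k = 0, 1, \dots, N$ set $V_k := \E[f(X) \mid X_1, \dots, X_k]$, so that $V_0 = \E f(X)$ and $V_N = f(X)$, and $(V_k)$ is a martingale with respect to the filtration generated by $X_1, \dots, X_k$. Writing $D_k := V_k - V_{k-1}$ for the increments, we have $\E[D_k \mid X_1, \dots, X_{k-1}] = 0$ and the telescoping identity $f(X) - \E f(X) = \sum_{k=1}^N D_k$.

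The key step --- and the one I expect to be the main obstacle --- is to show that, conditioned on $X_1, \dots, X_{k-1}$, the increment $D_k$ is supported in an interval of length at most $L_k$. Using independence, $V_k = h_k(X_1, \dots, X_k)$ where $h_k(x_1, \dots, x_k) := \E[f(x_1, \dots, x_k, X_{k+1}, \dots, X_N)]$, and $V_{k-1} = \E_{X_k'}[h_k(X_1, \dots, X_{k-1}, X_k')]$. Hence, given $X_1, \dots, X_{k-1}$, the quantity $D_k = h_k(X_1, \dots, X_k) - \E_{X_k'}[h_k(X_1, \dots, X_{k-1}, X_k')]$ ranges over an interval of length at most $\sup_{x, x'} \bigl( h_k(X_1, \dots, X_{k-1}, x) - h_k(X_1, \dots, X_{k-1}, x') \bigr)$, and since $h_k$ is an average over $X_{k+1}, \dots, X_N$ of $f$ with only its $k$-th argument varied, the bounded difference property \eqref{eqn:bounded_diff_prop} bounds this by $L_k$. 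Independence is what ensures that integrating out the later coordinates does not enlarge the oscillation in the $k$-th coordinate.

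Given this conditional range bound, I would apply Hoeffding's lemma: a mean-zero random variable supported on an interval of length $L_k$ has conditional moment generating function at most $\exp(\lambda^2 L_k^2 / 8)$, i.e. $\E[e^{\lambda D_k} \mid X_1, \dots, X_{k-1}] \le \exp(\lambda^2 L_k^2 / 8)$ for every $\lambda \in \sR$. Iterating this through the tower property over $k = N, N-1, \dots, 1$ gives $\E\bigl[ \exp(\lambda (f(X) - \E f(X))) \bigr] \le \exp\bigl( \tfrac{\lambda^2}{8} \sum_{k=1}^N L_k^2 \bigr)$. A Chernoff bound then yields $\mathbb{P}[ f(X) - \E f(X) \ge t ] \le \exp\bigl( \tfrac{\lambda^2}{8} \sum_k L_k^2 - \lambda t \bigr)$ for all $\lambda > 0$, and optimizing over $\lambda$ (at $\lambda = 4t / \sum_k L_k^2$) gives $\mathbb{P}[ f(X) - \E f(X) \ge t ] \le \exp\bigl( -2t^2 / \sum_k L_k^2 \bigr)$.

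Finally, I would run the same argument with $-f$ in place of $f$ --- which satisfies the bounded difference property with the same constants $L_1, \dots, L_N$ --- to obtain the matching lower-tail bound, and then union-bound the two tail events to get $\mathbb{P}[ \lvert f(X) - \E f(X) \rvert \ge t ] \le 2 \exp\bigl( -2t^2 / \sum_k L_k^2 \bigr)$, which is the claimed inequality. The only genuinely delicate point in the argument is the conditional range estimate for $D_k$ in the second paragraph; everything else is the routine Azuma--Hoeffding machinery.
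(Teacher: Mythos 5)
Your proof is correct: the Doob-martingale construction, the conditional range bound on the increments $D_k$ (where you rightly flag independence as what keeps the oscillation at $L_k$ after integrating out the later coordinates), Hoeffding's lemma, the Chernoff optimization at $\lambda = 4t/\sum_k L_k^2$, and the union bound for the two-sided statement are all sound. The paper does not prove this lemma itself --- it simply defers to Corollary 2.21 of \citet{wainwright2019high} --- and your Azuma--Hoeffding argument is precisely the standard proof given there, so there is no substantive difference in approach.
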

\begin{proof}
  We refer the readers to corollary 2.21 of \citet{wainwright2019high} for the proof.
\end{proof}

\begin{theorem}
  \label{thm:rademacher_complexity_bound}
  (Rademacher Complexity Bound.)
  Fix a data distribution $\gD_{\gX}$ and parameter $\delta \in (0, 1)$.
  Suppose $\gF \subseteq \{f: \gX \to [0, B]\}$ and $X = (X_1, \dots, X_N)$ is drawn i.i.d. from $\gD_\gX$.
  Then with probability at least $1 - \delta$ over the draw of $X$, for any function $f \in \gF$,
  \begin{align}
    \Bigg\lvert \E_{x \sim \gD_\gX}\left[ f(x) \right] - \frac{1}{n} \sum_{n=1}^N f(X_n)\Bigg\rvert \leq 2 \fR_N(\gF) + B \sqrt{\frac{\log(2/\delta)}{2n}}.
  \end{align}
\end{theorem}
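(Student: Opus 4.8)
The plan is the classical two-step route for uniform convergence: a concentration step via McDiarmid's inequality to pass from the empirical deviation to its expectation, and then a symmetrization step to bound that expectation by the Rademacher complexity.

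Write $\Phi(X) := \sup_{f \in \gF} \left| \E_{x \sim \gD_\gX}[f(x)] - \frac{1}{N} \sum_{n=1}^N f(X_n) \right|$ for the quantity we must control uniformly in $f$. First I would check that $\Phi$, viewed as a function of the independent coordinates $X_1, \dots, X_N$, satisfies the bounded difference property with constants $L_n = B/N$: since every $f \in \gF$ maps into $[0, B]$, replacing a single $X_n$ by some $X_n'$ perturbs each empirical average $\frac{1}{N}\sum_k f(X_k)$ by at most $B/N$, and a supremum over $f$ of quantities that each move by at most $B/N$ itself moves by at most $B/N$. Plugging $\sum_{n=1}^N L_n^2 = B^2/N$ into McDiarmid's inequality and setting the right-hand side equal to $\delta$ gives: with probability at least $1 - \delta$,
\[
  \Phi(X) \le \E[\Phi(X)] + B \sqrt{\frac{\log(2/\delta)}{2N}}.
\]

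The substantive step is to show $\E[\Phi(X)] \le 2 \fR_N(\gF)$. I would introduce an independent ghost sample $X' = (X_1', \dots, X_N')$ drawn i.i.d. from $\gD_\gX$, rewrite $\E_{x \sim \gD_\gX}[f(x)] = \E_{X'}[\frac{1}{N}\sum_n f(X_n')]$, and move the supremum outside the conditional expectation over $X'$ by Jensen's inequality, obtaining $\E[\Phi(X)] \le \E_{X, X'} \sup_{f \in \gF} \left| \frac{1}{N}\sum_n (f(X_n') - f(X_n)) \right|$. Because the pairs $(X_n, X_n')$ are i.i.d. and each difference $f(X_n') - f(X_n)$ is symmetric about zero, multiplying the $n$-th summand by a Rademacher sign $\varepsilon_n$ leaves the joint distribution unchanged; splitting the resulting sum with the triangle inequality and using that the two halves are identically distributed then bounds the expectation by $2\,\E_{X, \varepsilon} \sup_{f \in \gF} \left| \frac{1}{N}\sum_n \varepsilon_n f(X_n) \right|$. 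Removing the absolute value is the standard final device: one also applies the argument to the class $-\gF$, and since the Rademacher variables are symmetric and $\fR_N$ is unaffected by a constant shift of the functions, $\fR_N(-\gF) = \fR_N(\gF)$, so the right-hand side is identified with $2 \fR_N(\gF)$; this same symmetry between $\gF$ and $-\gF$ is what upgrades the one-sided deviation to the two-sided one in the statement. Combining with the McDiarmid step yields the claimed inequality.

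I do not expect a genuine obstacle here: this is a textbook result (see, e.g., \citet{wainwright2019high}). The only points that demand care are mechanical — tracking the constant so that $\sum_n L_n^2 = B^2/N$ and $\log(2/\delta)$ combine into $B\sqrt{\log(2/\delta)/(2N)}$, and reconciling the two-sided absolute value in the statement with the one-sided Rademacher complexity in the definition via the $\pm\gF$ symmetrization above. The symmetrization argument is the conceptual core, but it is entirely standard, so the proof should be short.
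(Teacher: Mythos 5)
Your proof is correct and is exactly the standard argument that the paper defers to: it cites Theorem 10 of Koltchinskii and Panchenko rather than writing out the McDiarmid-plus-symmetrization steps, noting only the same $L_n = B/N$ bounded-difference constants you identify. Your explicit handling of the two-sided bound via $\fR_N(-\gF) = \fR_N(\gF)$ and a union bound is the right way to reconcile the absolute value with the paper's (unsigned) definition of Rademacher complexity, so nothing is missing.
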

\begin{proof}
  This result follows closely to the proof of Theorem 10 in~\citet{koltchinskii2002empirical}, where the only modifications come from applying $B$-boundedness of $f$ when applying McDiarmid's inequality (i.e. $L_n \leq \frac{B}{N}, \forall n \in [N]$.)
\end{proof}

\begin{theorem}
  \label{thm:vector_contraction_inequality}
  (Vector-contraction Inequality \citep{DBLP:conf/alt/Maurer16}.)
  Let $x_1, \dots, x_N \in \gX$, $\gF \subseteq \{f: \gX \to \sR^D \}$ be a class of functions, and $h_n : \sR^D \to \sR$ to be $L$-Lipschitz, for all $n \in [N]$.
  Then
  \begin{align*}
    \E \left[ \sup_{f \in \gF} \sum_{n=1}^N \varepsilon_n h_n(f(x_n)) \right] \leq \sqrt{2} L \E \left[ \sup_{f \in \gF} \sum_{d=1}^D \sum_{n=1}^N \varepsilon_{d, n} f_d(x_n) \right],
  \end{align*}
  where $\varepsilon_n, \varepsilon_{d, n}$ are i.i.d. sequences of Rademacher variables, and $f_d(x_n)$ is the $d$'th component of $f(x_n)$.
\end{theorem}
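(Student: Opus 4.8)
The plan is to prove the vector-contraction inequality by reducing it to a scalar ``single data point'' contraction lemma and then iterating that lemma over $n=1,\dots,N$, replacing one term $\varepsilon_n h_n(f(x_n))$ at a time; this is the route of \citet{DBLP:conf/alt/Maurer16}. Throughout, finiteness of all the suprema is not an issue here because $\gF$ is uniformly bounded (Assumptions~\ref{assum:bounded_representation}--\ref{assum:linear_task_mapping}) and each $h_n$ is Lipschitz.

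First I would establish the single-point lemma: for any bounded functional $g:\gF\to\sR$, any $L$-Lipschitz $h:\sR^D\to\sR$, and any $x\in\gX$,
\[
  \E_{\varepsilon}\sup_{f\in\gF}\bigl[g(f)+\varepsilon\,h(f(x))\bigr]\ \le\ \E_{\varepsilon_1,\dots,\varepsilon_D}\sup_{f\in\gF}\Bigl[g(f)+\sqrt2\,L\textstyle\sum_{d=1}^D\varepsilon_d\,f_d(x)\Bigr],
\]
where $\varepsilon$ on the left is a single Rademacher variable. Splitting the left-hand side on the sign of $\varepsilon$ and merging the two suprema rewrites it as $\tfrac12\sup_{f,f'}\bigl[g(f)+g(f')+h(f(x))-h(f'(x))\bigr]$. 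I would then bound $h(f(x))-h(f'(x))\le L\,\twonorm{f(x)-f'(x)}$ by Lipschitzness, and apply the \emph{sharp} lower Khintchine inequality (Szarek's constant) in the form $\twonorm{v}\le\sqrt2\,\E_{\varepsilon'}\bigl|\langle\varepsilon',v\rangle\bigr|$ to $v=f(x)-f'(x)$. Pulling the supremum inside the $\varepsilon'$-expectation by Jensen, using the symmetry $f\leftrightarrow f'$ of the (unordered) pair to ``open'' the absolute value with the sign that is favorable for each fixed $\varepsilon'$, and using $\varepsilon'\overset{d}{=}-\varepsilon'$ to equate the two resulting suprema, the right-hand side collapses to exactly $\E_{\varepsilon'}\sup_f[g(f)+\sqrt2 L\sum_d\varepsilon'_d f_d(x)]$, which is the claim.

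Next I would run the peeling induction. At stage $n$, condition on every Rademacher variable except $\varepsilon_n$ (namely on $\{\varepsilon_m:m>n\}$ and $\{\varepsilon_{d,m}:m<n\}$); the partial sum $g(f):=\sqrt2 L\sum_{m<n}\sum_d\varepsilon_{d,m}f_d(x_m)+\sum_{m>n}\varepsilon_m h_m(f(x_m))$ is then a fixed functional of $f$, so the single-point lemma with $h=h_n$, $x=x_n$ replaces the $n$-th term by $\sqrt2 L\sum_d\varepsilon_{d,n}f_d(x_n)$; taking the outer expectation preserves the inequality by monotonicity. After $N$ stages every $h_n$-term has been replaced, yielding $\E\sup_f\sum_n\varepsilon_n h_n(f(x_n))\le\sqrt2 L\,\E\sup_f\sum_{d,n}\varepsilon_{d,n}f_d(x_n)$.

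The hard part will be Step~1, and in particular obtaining the constant $\sqrt2$ rather than something worse: it hinges entirely on the sharp Khintchine bound $\E|\langle\varepsilon,v\rangle|\ge\twonorm{v}/\sqrt2$ (the elementary fourth-moment argument only gives $1/\sqrt3$). The other point requiring care is the legitimacy of ``opening'' $|\langle\varepsilon',f(x)-f'(x)\rangle|$ underneath the joint supremum over $(f,f')$ — this is valid precisely because the pair is unordered, so both sign choices produce the same supremum — together with the harmless but easy-to-miss step of tracking that the conditioning at each stage keeps $g$ independent of the single variable being processed.
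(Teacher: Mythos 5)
The paper does not actually prove this theorem---it only points to Theorem~3 and Corollary~4 of \citet{DBLP:conf/alt/Maurer16}---and your outline is a correct reconstruction of exactly that cited argument: the single-Rademacher-variable contraction lemma obtained via the sign-split into $\tfrac12\sup_{f,f'}$, the Lipschitz bound, the sharp lower Khintchine inequality with Szarek's constant $1/\sqrt{2}$, Jensen's inequality, and the unordered-pair symmetry to remove the absolute value, followed by the coordinate-by-coordinate peeling induction over $n$ with conditioning on the remaining Rademacher variables. I see no gaps; this is essentially the same approach as the source the paper defers to.
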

\begin{proof}
  We refer the readers to theorem 3 and corollary 4 of \citet{DBLP:conf/alt/Maurer16} for the proof.
\end{proof}

\subsection{Proof of Theorem \ref{thm:trained_representation_risk_bound}}
\label{appendix:theorem_2}
Theorem \ref{thm:trained_representation_risk_bound} states the following:

\textit{
Let $\hat{\phi}$ be the ERM of $\hat{R}_{train}$ defined in \eqref{eqn:train_erm}.
Let $\phi^*$ be the true representation and $\boldsymbol{f}^* = (f_1^*, \dots, f_T^*)\in \gF^{\otimes T}$ be the $T$ true task-specific mappings.
If the assumptions \ref{assum:bounded_representation} to \ref{assum:realizability} hold, then with probability $1 - \delta$,
\begin{align*}
    \bar{d}_{\gF}(\hat{\phi}; \phi^*, \boldsymbol{f}^*) \leq 8 \sqrt{2} C_\gF \fR_{NT}(\Phi) + 2B \sqrt{\frac{\log(2/\delta)}{2NT}}.
\end{align*}
}

\begin{proof}
    The proof follows closely from the analysis of \citet{NEURIPS2020_59587bff}.
    First, recall that $\hat{\phi}, \hat{\boldsymbol{f}}$ are respectively the ERMs of \eqref{eqn:train_erm} and \eqref{eqn:test_erm}.
    For any $T$ task-specific mappings $\boldsymbol{f}' = (f'_1, \dots, f'_T)$ and a representation $\phi'$.
    Let $\boldsymbol{f}^* = (f_1^*, \dots, f_T^*)$ be the true task-specific mappings and $\phi^*$ be the true shared representation.
    We define the centered training risk and its empirical counterpart respectively as:
    \begin{align*}
        L(\boldsymbol{f}', \phi', \boldsymbol{f}^*, \phi^*) &= \frac{1}{T}\sum_{t=1}^{T} \E_{(s_t, a_t)} \left[ \ell (\pi^{f'_t, \phi'}(s_t), a_t) - \ell (\pi^{f_t^*, \phi^*}(s_t), a_t) \right],\\
        \hat{L}(\boldsymbol{f}', \phi', \boldsymbol{f}^*, \phi^*) &= \frac{1}{T}\sum_{t=1}^{T} \sum_{n=1}^{N} \left( \ell (\pi^{f'_t, \phi'}(s_{t, n}), a_{t, n}) - \E_{(s_t, a_t)} \left[\ell (\pi^{f_t^*, \phi^*}(s_t), a_t) \right] \right).
    \end{align*}
    Define $\tilde{\boldsymbol{f}} = \frac{1}{T}\sum_{t=1}^{T} \arg\inf_{f_t \in \gF} \E_{s_t, a_t} \left[ \ell (\pi^{f_t, \hat{\phi}}(s_t), a_t) - \ell (\pi^{f_t^*, \phi^*}(s_t), a_t) \right]$ to be the minimizer of the centered training risk by fixing $\hat{\phi}$.
    Then, we have that $L(\tilde{\boldsymbol{f}}, \hat{\phi}, \boldsymbol{f}^*, \phi^*) = \bar{d}_\gF(\hat{\phi} ; \phi^*, \boldsymbol{f}^*)$.
    Now, we aim to upper bound $\bar{d}_\gF(\hat{\phi} ; \phi^*, \boldsymbol{f}^*)$ through the difference in the centered training risk:
    \begin{align*}
      \bar{d}_\gF(\hat{\phi} ; \phi^*, \boldsymbol{f}^*) &= 
      L(\tilde{\boldsymbol{f}}, \hat{\phi}, \boldsymbol{f}^*, \phi^*) - L(\boldsymbol{f}^*, \phi^*, \boldsymbol{f}^*, \phi^*)\\
      &= \underbrace{L(\tilde{\boldsymbol{f}}, \hat{\phi}, \boldsymbol{f}^*, \phi^*) - L(\hat{\boldsymbol{f}}, \hat{\phi}, \boldsymbol{f}^*, \phi^*)}_{\leq 0} +
        L(\hat{\boldsymbol{f}}, \hat{\phi}, \boldsymbol{f}^*, \phi^*) - L({\boldsymbol{f}^*}, \phi^*, \boldsymbol{f}^*, \phi^*),
    \end{align*}
    where the first difference is non-positive by definition of $\tilde{\boldsymbol{f}}$.
    Thus, it remains to bound the second difference, which can be done via standard risk decomposition.
    First, recall that:
    \begin{align*}
        \hat{R}_{train}(\boldsymbol{f}, \phi) &= \frac{1}{NT} \sum_{t=1}^T \sum_{n=1}^N \ell(\pi^{f_t, \phi}(s_{t, n}), a_{t, n}),\\
        R_{train}(\boldsymbol{f}, \phi) &= \E\left[ \hat{R}_{train}(\boldsymbol{f}, \phi) \right].
    \end{align*}
    Then, we have that
    \begin{align*}
        L(\hat{\boldsymbol{f}}, \hat{\phi}, \boldsymbol{f}^*, \phi^*) - L(\boldsymbol{f}^*, \phi^*, \boldsymbol{f}^*, \phi^*) =&
        L(\hat{\boldsymbol{f}}, \hat{\phi}, \boldsymbol{f}^*, \phi^*) - \hat{L}(\hat{\boldsymbol{f}}, \hat{\phi}, \boldsymbol{f}^*, \phi^*)\\
        &+
        \hat{L}(\hat{\boldsymbol{f}}, \hat{\phi}, \boldsymbol{f}^*, \phi^*) - \hat{L}(\boldsymbol{f}^*, \phi^*, \boldsymbol{f}^*, \phi^*)\\
        &+
        \hat{L}(\boldsymbol{f}^*, \phi^*, \boldsymbol{f}^*, \phi^*) - L(\boldsymbol{f}^*, \phi^*, \boldsymbol{f}^*, \phi^*)\\
        \numleq{\text{i}}&
        L(\hat{\boldsymbol{f}}, \hat{\phi}, \boldsymbol{f}^*, \phi^*) - \hat{L}(\hat{\boldsymbol{f}}, \hat{\phi}, \boldsymbol{f}^*, \phi^*)\\
        &+
        \hat{L}(\boldsymbol{f}^*, \phi^*, \boldsymbol{f}^*, \phi^*) - L(\boldsymbol{f}^*, \phi^*, \boldsymbol{f}^*, \phi^*)\\
        \leq& 2 \sup_{\boldsymbol{f} \in \gF^{\otimes T}, \phi \in \Phi} \lvert R_{train}(\boldsymbol{f}, \phi) - R_{train}(\boldsymbol{f}^*, \phi^*) \rvert\\
        \numleq{\text{ii}}&
        2 \left( 2\fR_{NT}(\ell \circ \gF \circ \Phi) + B\sqrt{\frac{\log (2/\delta)}{2NT}} \right) \text{ w.p. } 1 - \delta,
    \end{align*}
    where in (i) the second difference is non-positive due to assumptions \ref{assum:deterministic_expert} and \ref{assum:realizability}; (ii) is a result of theorem \ref{thm:rademacher_complexity_bound}, where we used proposition \ref{prop:log_softmax_boundedness_lipschitzness}.

    We now bound the Rademacher complexity term $\fR_{NT}(\ell \circ \gF \circ \Phi)$
    Note that $\ell \circ \gF$ is $2C_\gF$-Lipschitz by proposition \ref{prop:log_softmax_linear_lipschitzness}, thus by theorem \ref{thm:vector_contraction_inequality}, we have that $\fR_{NT}(\ell \circ \gF \circ \Phi) \leq 2\sqrt{2} C_\gF \fR_{NT}(\Phi)$.
    Substituting this upper bound into the above, we get that with probability at least $1 - \delta$,
    \begin{align*}
        \bar{d}_\gF(\hat{\phi} ; \phi^*, \boldsymbol{f}^*) \leq 8\sqrt{2}C_\gF \fR_{NT}(\Phi) + 2B\sqrt{\frac{\log (2/\delta)}{2NT}}.
    \end{align*}
\end{proof}

\subsection{Proof of Theorem \ref{thm:transfer_risk_bound}}
\label{appendix:theorem_3}
Theorem \ref{thm:transfer_risk_bound} states the following:

\textit{
let $\hat{f}_\tau$ be the ERM of $\hat{R}_{test}$ defined in \eqref{eqn:test_erm} with some fixed $\hat{\phi} \in \Phi$.
If the assumptions \ref{assum:bounded_representation} to \ref{assum:realizability} hold, then with probability $1 - \delta$,
\begin{align*}
  R_{transfer}(\hat{f}_\tau, \hat{\phi}) \leq 8 C_\gF C_\Phi \sqrt{\frac{\lvert\gA\rvert}{M}} + 2B \sqrt{\frac{\log(2 / \delta)}{2M}} + d_{\tau,\gF}(\hat{\phi}; \phi^*).
\end{align*}
}
\begin{proof}
  The proof follows closely from the analysis of \citet{NEURIPS2020_59587bff}.
  First recall that:
  \begin{align*}
    R_{transfer}(\hat{f}_\tau, \hat{\phi}) &= R_{test}(\hat{f}_\tau, \hat{\phi}) - R_{test}(f^*_\tau, \phi^*),\\
    R_{test}(f, \phi) &= \E\left[ \hat{R}_{test}(f, \phi) \right],\\
    \hat{R}_{test}(f, \phi) &= \frac{1}{M}\sum_{m=1}^{M} \ell((f \circ \phi)(s_m), a_m),\\
    \hat{f} &= \argmin_{f \in \gF}\hat{R}_{test}(f, \hat{\phi}).
  \end{align*}
  Consider the minimizer of the test risk given the estimated representation $\hat{\phi}$: $\hat{f}^* = \argmin_{f \in \gF} R_{test}(f, \hat{\phi})$.
  Then, we have that
  \begin{align*}
    &R_{test}(\hat{f}_\tau, \hat{\phi}) - R_{test}(f^*_\tau, \phi^*)\\
    =& \underbrace{R_{test}(\hat{f}_\tau, \hat{\phi}) - R_{test}(\hat{f}^*_\tau, \hat{\phi})}_{\text{(a)}} + \underbrace{R_{test}(\hat{f}^*_\tau, \hat{\phi}) - R_{test}(f^*_\tau, \phi^*)}_{\text{(b)}}.
  \end{align*}
  We bound the term (a) via standard risk decomposition:
  \begin{align*}
    &R_{test}(\hat{f}_\tau, \hat{\phi}) - R_{test}(\hat{f}^*_\tau, \hat{\phi})\\
    =& R_{test}(\hat{f}_\tau, \hat{\phi}) - \hat{R}_{test}(\hat{f}_\tau, \hat{\phi})
    + \hat{R}_{test}(\hat{f}_\tau, \hat{\phi}) - \hat{R}_{test}(\hat{f}^*_\tau, \hat{\phi})
    + \hat{R}_{test}(\hat{f}^*_\tau, \hat{\phi}) - R_{test}(\hat{f}^*_\tau, \hat{\phi})\\
    \numleq{\text{i}}& R_{test}(\hat{f}_\tau, \hat{\phi}) - \hat{R}_{test}(\hat{f}_\tau, \hat{\phi})
    + \hat{R}_{test}(\hat{f}^*_\tau, \hat{\phi}) - R_{test}(\hat{f}^*_\tau, \hat{\phi})\\
    \leq& 2 \sup_{f \in \gF} \lvert R_{test}(f, \hat{\phi}) - \hat{R}_{test}(f, \hat{\phi}) \rvert\\
    \numleq{\text{ii}}& 2\left( 2 \fR_M(\ell \circ \gF) + B \sqrt{\frac{\log(2/\delta)}{2M}} \right) \text{ w.p. } 1 - \delta,
  \end{align*}
  where (i) follows from the fact that $\hat{f}$ is the empirical test risk minimizer by definition---clearly $\hat{R}_{test}(\hat{f}_\tau, \hat{\phi}) - \hat{R}_{test}(\hat{f}^*_\tau, \hat{\phi}) \leq 0$;
  (ii) is a result of theorem \ref{thm:rademacher_complexity_bound}.

  We now bound the Rademacher complexity term $\fR_M(\ell \circ \gF)$.
  Note that $\ell$ is $\sqrt{2}$-Lipschitz in its first argument for every $a \in \gA$ from proposition \ref{prop:log_softmax_boundedness_lipschitzness}.
  By theorem \ref{thm:vector_contraction_inequality}, we have that $\fR_M(\ell \circ \gF) \leq 2\fR_M(\gF)$.
  Thus, substituting this upper bound into the above, we get that with probability at least $1 - \delta$,
  \begin{align*}
    R_{test}(\hat{f}_\tau, \hat{\phi}) - R_{test}(\hat{f}^*_\tau, \hat{\phi}) \leq 8 \fR_M(\gF) + 2B \sqrt{\frac{\log(2/\delta)}{2M}}.
  \end{align*}

  Now, by assumption \ref{assum:linear_task_mapping}, $\gF$ is the class of linear functions with bounded Frobenius norm.
  Consider the empirical Rademacher complexity $\hat{\fR}_S(\gF)$:
  \begin{align*}
    \hat{\fR}_S(\gF) &= \frac{1}{M}\E\left[ \sup_{f \in \gF} \sum_{a=1}^{\lvert\gA\rvert} \sum_{m=1}^M \varepsilon_{am} f_a(\phi_m) \right]\\
    &\numleq{\text{i}} \frac{1}{M} C_\gF \sqrt{\lvert\gA\rvert \sum_{m=1}^M \twonorm{\phi_m}^2}\\
    &\numleq{\text{ii}} \frac{1}{M} C_\gF \sqrt{\frac{\lvert\gA\rvert C_\Phi^2}{M}}\\
    &\leq C_\gF C_\Phi \sqrt{\frac{\lvert\gA\rvert}{M}},
  \end{align*}
  where (i) follows from section 4.2 of \citet{DBLP:conf/alt/Maurer16} and applying the Frobenius norm to the inequality; (ii) follows from the fact that $\phi_m = \phi(s_m), s_m \sim \nu_\pi$, thus $\twonorm{\phi_m} \leq C_\Phi$ by assumption \ref{assum:bounded_representation}.
  Consequently, taking the expectation over $X$ we get $\fR_M(\gF) \leq C_\gF C_\Phi \sqrt{\lvert\gA\rvert/M}$.

  We can bound the term (b) as follows:
  \begin{align*}
    & R_{test}(\hat{f}^*_\tau, \hat{\phi}) - R_{test}(f^*_\tau, \phi^*)\\
    =& \inf_{f \in \gF} R_{test}(f, \hat{\phi}) - \inf_{f' \in \gF} R_{test}(f', \phi^*)\\
    \leq & \sup_{f' \in \gF} \inf_{f \in \gF} R_{test}(\hat{f}^*_\tau, \hat{\phi}) - R_{test}(f', \phi^*)\\
    =& d_\gF (\hat{\phi} ; \phi^*).
  \end{align*}

  Finally, by comining both (a) and (b) terms, we conclude that with probability at least $1 - \delta$,
  \begin{align*}
    R_{transfer}(\hat{f}_\tau, \hat{\phi}) \leq 8 C_\gF C_\Phi \sqrt{\frac{\lvert\gA\rvert}{M}} + 2B \sqrt{\frac{\log(2 / \delta)}{2M}} + d_{\tau,\gF}(\hat{\phi}; \phi^*).
  \end{align*}
\end{proof}

\subsection{Proof of Corollary \ref{cor:trained_representation_transfer_risk_bound}}
\label{appendix:corollary_1}
Corollary \ref{cor:trained_representation_transfer_risk_bound} states the following:

\textit{
Let $\hat{\phi}$ be the ERM of $\hat{R}_{train}$ defined in \eqref{eqn:train_erm} and let $\hat{f}_\tau$ be the ERM of $\hat{R}_{test}$ defined in \eqref{eqn:test_erm} by fixing $\hat{\phi}$.
Suppose the source tasks are $\sigma$-diverse.
If the assumptions \ref{assum:bounded_representation} to \ref{assum:realizability} hold, then with probability $1 - 2\delta$, $R_{transfer}(\hat{f}_\tau, \hat{\phi})$ is upper bounded by:
\begin{align*}
    \gO \left( C_\gF C_\Phi \sqrt{\frac{\lvert\gA\rvert}{M}} + B \sqrt{\frac{\log(2 / \delta)}{M}} + \frac{1}{\sigma} \left( C_\gF \fR_{NT}(\Phi) + B \sqrt{\frac{\log(2/\delta)}{NT}} \right) \right).
\end{align*}
}
\begin{proof}
    Set the probability of bad events of theorems \ref{thm:trained_representation_risk_bound} and \ref{thm:transfer_risk_bound} to be $\delta$ each.
    We obtain the desired results by taking complment of the union bound over the bad events and merging the terms.
\end{proof}

\subsection{Proof of Theorem \ref{thm:transfer_il_error_bound}}
\label{appendix:theorem_1}
Theorem \ref{thm:transfer_il_error_bound} states the following:

\textit{
Let $\hat{\phi}$ be the ERM of $\hat{R}_{train}$ defined in \eqref{eqn:train_erm} and let $\hat{f}_\tau$ be the ERM of $\hat{R}_{test}$ defined in \eqref{eqn:test_erm} by fixing $\hat{\phi}$.
Suppose the source tasks are $\sigma$-diverse.
Under assumptions \ref{assum:bounded_representation} to \ref{assum:realizability}, we have that with probability $1 - 2\delta$,
\begin{align}
  \maxnorm{v^{\pi^*_\tau} - v^{\softmax (\hat{f}_\tau \circ \hat{\phi})}} \leq \frac{2 \sqrt{2}}{(1 - \gamma)^2} \sqrt{\varepsilon_{gen} + 2\zeta},
\end{align}
where $\varepsilon_{gen}$ is the RHS of \eqref{eqn:representation_transfer_risk_bound}.
}

\begin{proof}
  Our goal is to apply theorem \ref{thm:policy_error_bound}.
  Let $\pi = \pi^*$ and $\pi' = \softmax (\hat{f}_\tau \circ \hat{\phi})$, we need to bound their expected Kullback–Leibler (KL) divergence.
  To accomplish this, note that theorem \ref{thm:transfer_risk_bound} essentially means that we can bound the test risk:
  \begin{align}
    R_{test}(\hat{f}_\tau, \hat{\phi}) \leq \varepsilon_{gen} + \varepsilon_{best},
  \end{align}
  where $\varepsilon_{gen}$ is the RHS of \eqref{eqn:representation_transfer_risk_bound} and $\varepsilon_{best} = R_{test}(f^*_\tau, \phi^*)$.
  Indeed, $R_{test}(\hat{f}_\tau, \hat{\phi})$ is the expected KL divergence between $\pi^*_\tau(s)$ and $\softmax (\hat{f}_\tau \circ \hat{\phi})(s)$:
  \begin{align*}
    R_{test}(\hat{f}_\tau, \hat{\phi}) &= \E_{(s, a) \sim \mu_{\pi^*_\tau}} \left[ -\log \softmax_a (\hat{f}_\tau \circ \hat{\phi})(s) \right]\\
    &= \E_{s \sim \nu_{\pi^*}} \left[ D_{KL}(\pi^*_\tau(s) \Vert \softmax (\hat{f}_\tau \circ \hat{\phi})(s)) \right].
  \end{align*}
  Finally, note that assumptions \ref{assum:deterministic_expert} and \ref{assum:realizability} imply that we can assume $f^* \in \gF$, thus we have that:
  \begin{align*}
    \varepsilon_{best} &= R_{test}(f^*, \phi^*)\\
    &\leq \min_{f \in \gF} R_{test}(f, \phi^*)\\
    &\leq \E_{(s, a) \sim \mu_{\pi^*_\tau}} \left[ -\log (1 - \zeta) \right]\\
    &\leq 2\zeta,
  \end{align*}
  where the third lines come from assumption \ref{assum:realizability} and last inequality comes from $-\log(1 - x) \leq 2x$ for $x \leq 1/2$.
  Finally, we upper bound $\varepsilon_{gen}$ using corollary \ref{cor:trained_representation_transfer_risk_bound} and get the desired result.
\end{proof}

\section{Algorithm Details}
\label{appendix:algorithm_details}
{In this section we provide details on the implementation of the multitask behaviour cloning (MTBC) algorithm used to obtain the results highlighted in section~\ref{sec:empirical_analysis}.
Recall that in the training phase we aim to obtain a shared representation $\hat{\phi}$ by minimizing $\hat{R}_{train}$, as described in \eqref{eqn:train_erm}.
As proposed by \citep{DBLP:conf/icml/AroraDKLS20}, this can be done by solving the following bi-level optimization objective:
\begin{align}
  \label{eqn:bilevel_obj}
  \hat{\phi} = \argmin_{\phi} \frac{1}{T} \sum_{t=1}^T \min_{\pi \in \Pi^\phi} \frac{1}{N} \sum_{n = 1}^N \ell(\pi(s_{t, n}), a_{t, n}),
\end{align}
where $(s_{t, n}, a_{t, n})$ is the $n$'th state-action pair in the $t$'th expert dataset and $\ell : \Delta^\gA \times \gA \to [0, \infty)$ is the loss function.
MTBC optimizes \eqref{eqn:bilevel_obj} through a gradient-based approach on a joint objective \eqref{eqn:practical_bilevel_obj} (see algorithm \ref{alg:mtbc_pretrain}).
During the transfer phase, MTBC fixes the representation $\hat{\phi}$ and minimizes $\hat{R}_{test}(f, \hat{\phi})$, as described in \eqref{eqn:test_erm}---this can be done via BC where we set the loss function $\ell$ to be the cross-entropy loss for discrete action space and mean-squared loss for continuous action space.
In practice, since $f$ is a linear function parameterized by $W$, MTBC propagates the gradient to update $W$ (see algorithm \ref{alg:mtbc_transfer}).

\begin{algorithm}[t]
  \caption{Multitask Behavioural Cloning (MTBC): Training Phase}
  \label{alg:mtbc_pretrain}
  \begin{algorithmic}[1]
  \State \textbf{input}: Number of epochs $K$, $T$ expert datasets ${\{\gD_t\}}_{t = 1}^{T}$, where $\lvert \gD_t \rvert = N$, and learning rates $\eta_\phi$ and $\eta_W$.
  \State Initialize parameters for each of the $T$ task-specific mappings: ${\{W_{t}^{(0)}\}}_{t=1}^T$.
  \State Initialize shared representation parameters $\phi^{(0)}$.
  \For{$k = 1, \dots, K$}
  \State Compute loss:
  \begin{align}
    \label{eqn:practical_bilevel_obj}
    \gL \left( \phi^{(k)}, {\{W^{(k)}_t\}}_{t=1}^T \right) = \frac{1}{NT} \sum_{t = 1}^T \sum_{n=1}^N \ell \left( \pi^{W_t^{(k - 1)}, \phi^{(k - 1)}}(s_{t, n}), a_{t, n} \right).
  \end{align}
  \State Update parameters:
  \begin{align*}
    \phi^{(k)} &= \phi^{(k - 1)} - \eta_\phi \nabla_\phi \gL \left( \phi^{(k)}, \{W^{(k)}_t\}_{t=1}^T \right),\\
    W_t^{(k)} &= W_t^{(k - 1)} - \eta_W \nabla_{W_t} \gL \left(\phi^{(k)}, \{W^{(k)}_t\}_{t=1}^T \right), \forall t \in [T].
  \end{align*}
  \EndFor
  \State \textbf{return} $\phi$.
  \end{algorithmic}
\end{algorithm}

\begin{algorithm}[t]
  \caption{Multitask Behavioural Cloning (MTBC): Transfer Phase}\label{alg:mtbc_transfer}
  \begin{algorithmic}[1]
  \State \textbf{input}: Representation $\phi$, number of epochs $K$, expert dataset $\gD$, where $\lvert \gD \rvert = M$, and learning rate $\eta_W$.
  \State Initialize parameters of the task-specific mapping: $W_\tau^{(0)}$.
  \For{$k = 1, \dots, K$}
  \State Compute loss: $\gL \left( \phi, W_\tau^{(k)} \right) = \frac{1}{M} \sum_{m=1}^M \ell \left( \pi^{W_\tau^{(k - 1)}, \phi}(s_{m}), a_{m} \right)$
  \State Update parameters: $W_\tau^{(k)} = W_\tau^{(k - 1)} - \eta_W \nabla_{W} \gL \left(\phi, W_\tau^{(k)} \right)$.
  \EndFor
  \State \textbf{return} $W^{(K)}$.
  \end{algorithmic}
\end{algorithm}

\section{Implementation Details}
\label{sec:implementation_details}
In this section, we describe in detail the implementation of our experimental analysis.

\subsection{Environments}
\label{appendix:envs}

\begin{figure}[t]
   \centering
   \includegraphics[width=0.19\linewidth]{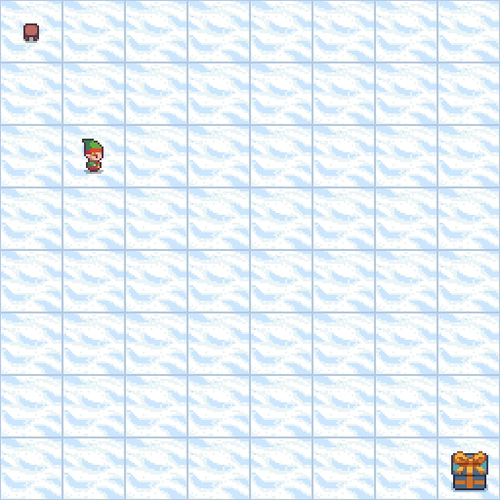}
   \includegraphics[width=0.19\linewidth]{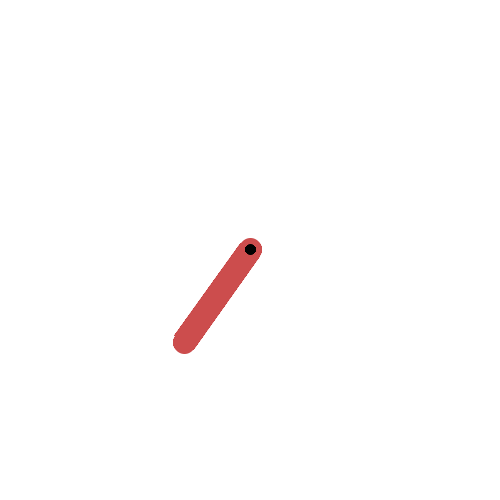}
   \includegraphics[width=0.19\linewidth]{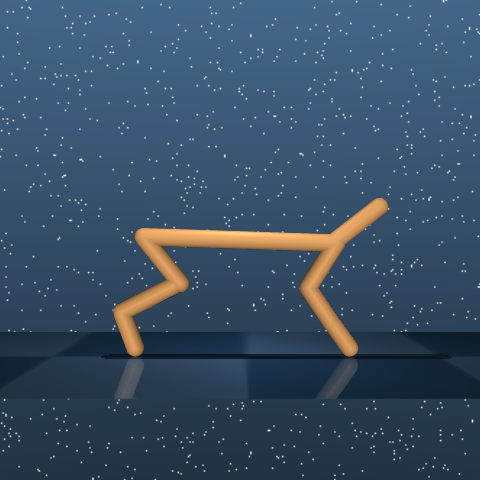}
   \includegraphics[width=0.19\linewidth]{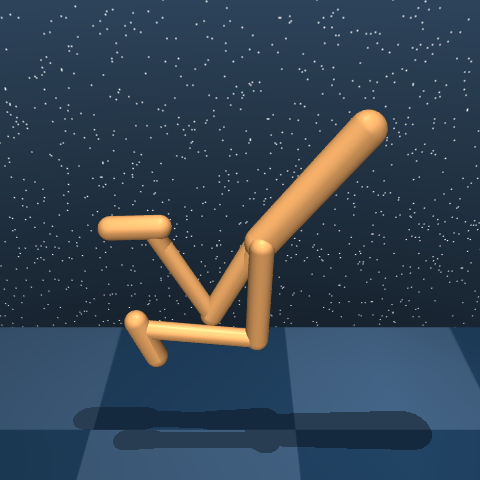}
   \caption{
      The set of environments used for the experiments.
      From left to right: frozen lake, pendulum, cheetah, and walker.
   }
   \label{fig:environments}
\end{figure}

We perform our experiments on four tasks: frozen lake, pendulum, cheetah run, and walker walk.
The first two environments are based on Gymnasium \citep{towers_gymnasium_2023} and the latter two are based on Deepmind control suite \citep{tunyasuvunakool2020}.
To bridge the theory and practice, we provide a discrete action space variant for the continuous environments to validate our hypotheses.
While in general discretizing action space may exclude the true optimal policy \citep{DBLP:conf/icml/DadashiHVGRGP22,NEURIPS2021_e46be61f,seyde2022solving}, our goal is to demonstrate that MTBC can obtain the expert policy with less target data when compared with BC.
Thus, we argue that not having the true optimal policy still validates our goal so long as the expert policy is non-trivial.
To generate multiple source tasks with shared state and action spaces, we modify phyiscal properties of the environment.
All environmental parameters are sampled from uniform distributions of bounded ranges.

\paragraph{Frozen Lake}
The frozen lake task requires an agent the navigate through a $8 \times 8$ grid with non-deterministic transitions.
The action space is the set $\{\text{LEFT}, \text{DOWN}, \text{RIGHT}, \text{UP}, \text{STAY}\}$.
To vary the tasks, we modify the initial state, the goal state, and the transition function.

\paragraph{Pendulum}
The pendulum task requires an agent to swing and keep the link upright.
The default action space is the torque of the revolute joint between $[-2, 2]$.
The discretized action space is the set $\{0, \pm 2^{-3}, \dots, , \pm 2^{1} \}$ (i.e. $11$ actions.)
This allows the agent to perform large-magnitude action for swinging the link up and to perform low-magnitude action for keeping the link upright.
To vary the tasks, we modify the maximum torque applied to the joint.

\paragraph{Locomotion}
Both the cheetah-run and walker-walk tasks require an agent to move above a specified velocity.
The latter task further scales the velocity based on the height of the agent.
The default action space is the torque applied to each of the joint, all bounded between $[-1, 1]$.
The discretized action space is the set $\{-1, 1\}$ per joint (i.e. Bang-Bang control \citep{NEURIPS2021_e46be61f},) thus we have $2^{\dim (\gA)}$ actions after discretizing the action space---we emphasize that we use softmax policies as opposed to per-dimension Bernoulli policies.
To vary the source tasks, we modify the links' size and the joint parameters.

\subsection{Generating the Experts and Demonstrations}
In this section we describe how we generate the experts for all environments.
We first obtain the expert policy $\pi^{default}$ for each environment using the default environmental parameters.
Then, to obtain the expert policy $\pi^{new}$ for each environmental variant, we initialize $\pi^{new}$ using $\pi^{default}$ and resume training.
This pretraining strategy speeds up training on new envrionment variant, as opposed to training a new policy from scratch.
We use Proximal Policy Optimization (PPO) \citep{schulman2017proximal} to train the expert policies for all environments.
In general, all PPO policies are trained using Adam optimizer \citep{DBLP:journals/corr/KingmaB14}.
We further use observation normalization and advantage normalization, as commonly done in practice \citep{engstrom2020implementation,hsu2020revisiting}.

To gather the demonstrations, we execute the expert policies and deterministically select the action with highest logits.
In practice, there is a time-out $H$ for all environments even though in theory they are infinite-horizon MDPs.
We gather a total of $\lceil N/H \rceil$ length $H$ time-out episodes and trim the extra $N \bmod H$ samples, which is more practical and time efficient, and similar to how practitioners leverage data in real-life applications.

\end{document}